  \renewcommand*{\chapnamefont}{\normalfont\Large\sffamily}
  \renewcommand*{\printchaptername}{%
    \chapnamefont\centering\@chapapp}
\def\@advisors{}
\newcommand{\advisors}[1]{\def\@advisors{#1}}
\def\@department{}
\newcommand{\department}[1]{\def\@department{#1}}
\def\@thesistype{}
\newcommand{\thesistype}[1]{\def\@thesistype{#1}}
\renewcommand{\maketitlehookb}{\vspace{1in}%
  \par\begin{center}\Large\sffamily\@thesistype\end{center}}
\renewcommand{\maketitlehookd}{%
  \vfill\par
  \begin{flushright}
    \sffamily
    \@advisors\par
    \@department
  \end{flushright}
}
\theoremstyle{plain}
\numberwithin{equation}{chapter}
\newtheorem{theorem}{Theorem}[chapter]
\newtheorem{remark}[theorem]{Remark}
\newtheorem{definition}[theorem]{Definition}
\newtheorem{lemma}[theorem]{Lemma}
\newtheorem{proposition}[theorem]{Proposition}
\theoremstyle{nonumberplain}
\newtheorem{proof}{Proof}
\newcommand{\N}{\mathbb{N}}
\newcommand{\R}{\mathbb{R}}
\renewcommand{\epsilon}{\ensuremath\varepsilon}
\DeclareMathOperator*{\argmax}{arg\,max}
\DeclareMathOperator*{\argmin}{arg\,min}
\title{Competition analysis on the over-the-counter credit default swap market}
\author{Louis Abraham}
\date{July 2020}
\begin{document}

\frontmatter

\begin{titlingpage}
  \calccentering{\unitlength}
  \begin{adjustwidth*}{\unitlength-24pt}{-\unitlength-24pt}
    \maketitle
  \end{adjustwidth*}
\end{titlingpage}

\begin{abstract}
  We study two questions related to competition on the OTC CDS market using data collected as part of the EMIR regulation.
  
  First, we wanted to study the competition between central counterparties through collateral requirements. We present models that successfully estimate the initial margin requirements. However, our estimations are not precise enough to use them as input to a predictive model for CCP choice by counterparties in the OTC market.
  
  Second, we model counterpart choice on the interdealer market using a novel semi-supervised predictive task. We present our methodology as part of the literature on model interpretability before arguing for the use of conditional entropy as the metric of interest to derive knowledge from data through a model-agnostic approach. In particular, we justify the use of deep neural networks to measure conditional entropy on real-world datasets. We created the \textit{Razor entropy} using the framework of algorithmic information theory and derived an explicit formula that is identical to our semi-supervised training objective. Finally, we borrowed concepts from game theory to define \textit{top-k Shapley values}. This novel method of payoff distribution satisfies most of the properties of Shapley values, and is of particular interest when the value function is monotone submodular. Unlike (classical) Shapley values, \textit{top-k Shapley values} can be computed in quadratic time of the number of features instead of exponential. We implemented our methodology and report the results on our particular task of counterpart choice.
  
  Finally, we present an improvement to the \textit{node2vec} algorithm that we intended to use to study the intermediation in the OTC market. We show that the neighbor sampling used in the generation of biased walks can be performed in logarithmic time with a quasilinear time (and space) pre-computation, unlike the current implementations that do not scale well.
\end{abstract}

\cleartorecto
\tableofcontents
\mainmatter

\chapter{Introduction}

The goal of this work is to apply machine learning techniques to the field of economics by studying financial market data from the point of view of machine learning interpretability and produce novel economical insight.

In this introduction, we start by explaining the context, the different entities and concepts that are to be analysed and the data sources. We then formulate the questions we tackled during this research project and explicit our other contributions.

\section{Context}

The data used in this work is essentially information about transactions from the over-the-counter (OTC) derivative market.
Our analysis was concentrated for simplicity on a special class of products, Credit Default Swaps ; we will briefly explain their mechanics, market dynamics and pricing models.
A fundamental concept when dealing with counterparty risk on OTC markets is central counterparty clearing that we define. Finally, we present the European market infrastructure regulation (EMIR) according to which our data was collected by European regulators.

\subsection{OTC derivative markets}

\paragraph{Over-the-counter trading}

Over-the-counter, as opposed to exchange trading, refers to the trading of goods (commodities, financial instrument and derivatives) done bilaterally between two entities, without the centrality of an exchange. Unlike exchange trading, there is no common place where traders can display or fetch buying/selling (bid/ask) prices.

\paragraph{Derivatives}

On the other side, the OTC market allows for a wider choice of products, because they don't have to be standardized. In particular, the OTC derivative market is important as it encompasses a lot of exotic products that cannot be traded on exchanges. These derivatives are much more flexible in the way they hedge the risk exposure. Most derivatives are defined by agreements  made by the International Swaps and Derivatives Association.

An over-the-counter trade is a bilateral contract between two parties (usually, financial companies) that defines some future transfers between them in the future. Exchange-traded derivatives are bilateral contracts too, but on OTC markets, the negotiation was made between the two parties without other participants being able to observe. For example, a \textit{forward} contract involves two parties, one in a long position and the other in a short position. They agree that the long party will buy a given asset from the short party at a given time in the future for a given price.

The market for OTC derivative is one of the largest in the world, and as grown exponentially since the 90~\cite{remolona1992recent}.

\subsection{Credit Default Swaps}

Credit default swaps (CDS) are an example of derivative contracts. They involve two parties, a buyer and a seller. The CDS is linked to a reference entity (e.g. a company or a government). The buyer of the CDS is bound to make regular payments to the seller until the contract expiration date. These payments are proportional to a theoretical value of the underlying bond, called \textit{notional}.
In return, in the event of credit events on the reference entity, auctions on the settlement price of CDS take place between the major financial institutions. Unlike coupons, the settlement price of CDS has a magnitude similar to the notional. The \textit{notional} is also the maximum amount payable for a credit event.

Under the hood, CDS are intended to be bought by holders of bonds emitted by the reference entity, but it is in no way an obligation. A bond is a debt instrument, a form of loan. The bond issuer (the reference entity) borrows money to the people who hold the bond. The original amount of money borrowed is called the \textit{nominal}, principal, par or face amount. The bond issuer will repay the nominal bond value at the \textit{maturity} date, after which the contract is terminated. Furthermore, coupons are regular payments of the issuer to the bond holder that repay the loan interest. In case the bond issuer goes bankrupt, it cannot pay the coupons nor the nominal.

Hence, a credit default swap can be viewed as a way to hedge (eliminate the risk) a bond. Indeed, in exchange for regular payments (that will decrease the interest rate of the bond), a bond holder is guaranteed not to lose the nominal amount. Note that anyone can buy a CDS, not only bond holders.

An important quantity is the CDS \textit{spread}, not to be confused with the bid/ask spread of a product, which is simply the difference between the offer and demand prices on a market. Instead, the spread of a CDS contract is the fraction of the notional that is to be paid annually (through coupons). Intuitively, the more risky a reference entity is, the higher the spread of the CDS linked to it will be. Spreads are expressed in basis point, where 1 basis point is $0.01\%$. For example, the buyer of a CDS with a notional of \$10 million with a 150 basis point spread is expected to make annual payments of \$150k (usually split in quarterly coupons).

More complex phenomenon arise when one considers the secondary market, aka when people want to transfer the ownership of a CDS. Without the ISDA standardization, it would be difficult to trade very different contracts, as they could include a lot of different clauses, e.g. the condition to declare default of the reference entity, the seniority (priority) of the underlying bond, recovery rates, \dots

Thankfully, the standard ISDA allows for a clear pricing model. Note that the expected value of a CDS is always zero for both sides. Theoretically, the spread should be adjusted to match the expected bond risk. However, to allow standardization, most CDS have a 1\% or 5\% coupon, which implies that \textit{upfront payments} are to be made to compensate the spread difference. Note that this upfront payment can be positive or negative, depending on the difference between the clean spread and the standardized one. We refer the interested reader to~\cite{white2013pricing} for a complete reference of CDS pricing under the ISDA agreements.

CDS are complex products but thanks to standardization, it is not incorrect to consider them like standard goods, with a quantity, the \textit{notional}, and a price, the \textit{spread}. In the rest of this document, we will use call \textit{price} the spread of a CDS, computed using the methodology from~\cite{white2013pricing}.

\subsection{Central counterparty clearing}

OTC derivative trading presents a lot of risks. In particular, counterparty risk has been deemed responsible of the 2007 credit risk by some actors. Counterparty risk is relevant for derivatives, as they are contracts implying some future delivery or payment. Hence, if a party defaults (goes bankrupt) prior to the expiration date of the contract, it cannot fulfill its part of the agreement. This is called counterparty risk.

To cover the counterparty risk, it is possible to trade through a CCP. Both sides have to agree that the transaction will be cleared centrally or are compelled by the regulation to do so. In practice, the clearing counterparty becomes the counterparty of both parties: instead of Company A buying from Company B, Company A will buy from the clearing counterparty and Company B will sell to the clearing counterparty.

Therefore, a clearing counterparty always has a null net balance, but is exposed to the counterparty risk of all the actors it has contracted with. Hence, the clearing counterparty will demand collateral in cash or other assets from both sides. This collateral is called margin. Other assets can be accounted as cash when deducting some fraction from their market value ; this fraction is called a \textit{haircut}. Safe assets have very low haircuts, while more volatile assets might have haircut rates as high as 50\%.

Since the 2009 G20 in Pittsburgh, regulators have mandated the use of central counterparty clearing (CCP) for part of the transactions on standardized derivative securities. 

Clearing counterparties can fix themselves the amount of collateral they require using a pricing model. Collateral can be assimilated to a transaction fee: even if it is not burnt, collateral transfer causes an opportunity cost for its owner. Central counterparties are commonly abbreviated as CCPs.

\subsection{EMIR reporting}
Since the enactment of the European Market Infrastructure Regulation\footnote{\url{https://en.wikipedia.org/wiki/European_Market_Infrastructure_Regulation}} in 2013, most EU counterparties trading in the derivative markets have to report all their derivative transactions.

Through the Bank of France, we had access to some of this highly valuable data. In the rest of this document, we call this data the EMIR dataset. The reporting format is detailed in chapter \ref{chap:data}. 

\section{Analysis of the central counterparty clearing market}

The first research question that we aimed to address is the competition between central clearing counterparties. 

Up to now, CCPs had received little attention from the academic community. Most of the debate had been focused on the netting efficiency of CCPs, how the scope of netting between products and bilateral or central clearing affects the quantity of collateral required to mitigate counterparty risk.

In this project, we wanted to investigate the type of competition at stake between CCPs. CCPs sell protection against counterparty credit risk to their members. For that purpose, they collect collateral from them, which is included in the price of the service offered by the CCP. At the same time, they compete for market share against other CCPs.

Furthermore, central clearing has a two-tiered structure. CCPs directly interact with their members, who in turn, offer clearing through CCPs to their clients. It is possible to study competition on these two dimensions: the choice of CCPs between members, and the choice of CCPs by clients.

There is no clear-cut evidence regarding the model of competition between CCPs and the type of competition at stake between CCPs could have an impact on financial stability.
We wanted to use the EMIR dataset to study the choice of CCPs by members and clients. The EMIR dataset contains several class of derivatives (credit, equity, forex, etc...). We wanted to use the credit derivative data (CDS) that had already been partially preprocessed, and also clean the equity derivative (mainly options and future contracts) data. However, the forced telecommuting caused by the exceptional sanitary situation made us technically unable to clean the equity derivative data, as it involved processing large amounts of data remotely without a proper IT architecture. Hence, our results concern only the CDS data.

Our intended methodology was the following. First, estimate the collateral required by CCPs using machine learning models. Then, include the collateral predictions in a discrete model of CCP choice by members.

However, one major challenge is that CCPs calculate collateral for portfolios of transactions instead of single transactions, which makes the price per transaction difficult to compute. After processing the data to reconstitute portfolios, it appeared that the reported collateral requirements were not predictable enough to be used as input to a CCP choice model. We report the results of our analysis and propose some explanations for the mismatch between the data and our models.

\section{Modelling interdealer transactions}

We then switched to a second problem, unrelated to central clearing, but building on the same CDS transactions. The OTC CDS market is two-tiered: there are both dealers (big banks) and clients. Each client exchanges mostly with a very small number of dealers (one or two) while the dealers trade both with clients and between them. The transactions between dealers and clients form a typical two-sided market, a well-studied problem in economics. Instead, we were particularly interested in the interdealer market that is much less studied.

Our problematic can be formulated through the following question: how do dealers choose other dealers to make a transaction with?

We make the following (strong) hypothesis: in each transaction, one dealer chose the other. This hypothesis forces our model to choose an \textit{aggressor} side for each transaction before computing the likelihood. We model this choice by a trainable prior and show it reduces to a typical expectation-maximization problem. Finally, we formulate this optimization problem as a differentiable objective that can be trained by gradient descent.

Drawing upon this model, we apply techniques inspired by the field of interpretability in machine learning to produce a principled framework for analyzing economical datasets.

Our contribution is also theoretical, with the development of two novel concepts: \textit{Razor entropy} and \textit{top-k Shapley values}. \textit{Razor entropy} is a generalization of our principled training objective that brings theoretical guarantees for our likelihood. \textit{Top-k Shapley values} are a solution to the payoff distribution problem in game theory. We show that they provide interesting properties for monotone submodular value functions and can be computed efficiently. 

\section{Other contributions}

In order to extend our analysis and feed our interdealer transaction predictive model some information about the transactions in the rest of the network, we apply the famous node2vec~\cite{grover2016node2vec} algorithm. When doing so, we made 2 realizations: the existing Python implementations are excessively slow, use huge amounts of memory, and the neighbor sampling as described in~\cite{grover2016node2vec} is not scalable as it uses non linear amounts of memory in a preprocessing step (it can also run slower when using a linear amount of memory thanks to a space--time trade-off).

We designed a randomized neighbor sampling algorithm based on rejection sampling that has the same expected running time as in~\cite{grover2016node2vec} but uses linear amounts of memory. We then programmed a novel implementation of node2vec~\cite{fastnode2vec}
that is orders of magnitude more efficient than existing ones, both thanks to our improved algorithm and our use of JIT (just-in-time) compilation.

\section{Acknowledgements}

We are grateful to Prof.\ Demange for her supervision during this work, and PhD candidate Thibaut Piquard for his unfailing support and help on this project. We also thank the whole macro-prudential policy team from the Bank of France, particularly Dr.\  Julien Idier and Aurore Schilte for their helpful comments on this work.
\chapter{EMIR Data}
\label{chap:data}

\section{Data description}

\subsection{Collection process}

\paragraph{EMIR} The EMIR regulation requires every European Union domiciled entity to submit every derivative contract they enter into to a so-called trade repository.
The detail of the reported data is specified by official guidelines \footnote{\href{https://www.esma.europa.eu/policy-rules/post-trading/trade-reporting}{https://www.esma.europa.eu/policy-rules/post-trading/trade-reporting}}. There are several trade repositories and each one records different subsets of data. Fortunately for our analysis, the DTCC trade repository that we get the data from captures the majority of the CDS market~\cite{osiewicz2015reporting}.

\paragraph{Available data} Most of the collected data is not available to the public. The Bank of France, as a regulatory entity, can access data related to the activities of French companies. In this category enters (i) any transaction made by a French institution, even if the other side of the trade is not French and (ii) any transaction of a product linked to a French company. For credit default swaps, this includes not only CDS on French companies (\textit{single names}), but also the much more traded indices. 

\paragraph{Index} An index is a basket of related assets with a public composition. Indices allow to trade a combination of these products, thus simplifying the actions of traders who want to own their combination. An example of such CDS index is iTraxx Europe, which combines CDS on the 125 most actively traded single names over the last period (indices' contents are updated semiannually). Indices are particularly interesting for our analysis as they are widely and frequently traded and we have access to all the transactions. We also have access to aggregated data provided by Bloomberg to track the interdealer OTC price of indices.

As the EMIR dataset is highly confidential and we accessed regulator's data, we are not able to explicit entity names or give out portfolio compositions in this report. However, this limitation does not apply to the aggregated results we present or the analysis we make.

\subsection{Data format}

The EMIR data we used is composed of 3 kind of \textit{reports} provided by DTCC, a trade repository. Those reports are called: \textit{trade activity report}, \textit{trade state report} and \textit{collateral valuation report}. 

\paragraph{Trade activity reports} The trade activity reports are the most granular ones: they contain the detail of every contract modification that are made. However, they are hard to preprocess because of this granularity.

\paragraph{Trade state reports} The trade state reports contain are daily aggregated versions of the trade activity reports that state the content of every portfolio. Because the trade activity reports are produced by the data collector (DTCC), we try to use them when possible, instead of the more complex trade activity reports. Furthermore, they avoid processing the whole history of activity reports by stating the positions of every portfolio.

\paragraph{Collateral valuation reports} The collateral valuation reports are the simplest ones and contain the collateral for each portfolio. Note that initial margins are only one of the different types of collateral that can be collected by CCPs.

\section{Data cleaning and augmentation}
\label{sec:cleaning}

\subsection{Activity report cleaning}

The activity reports contain a lot of superfluous information that are not relevant to our analysis. Indeed, they contain every modification made to a contract, most of which are irrelevant to the study of counterparty choice, like novations, terminations, error corrections and trade compressions.

A major challenge is that the activity reports are bilateral reports, that is one transaction is to be reported by the two sides if they are both European. This can lead to duplicates of transactions. Furthermore, the reporting process is manual for most entities which can lead to transaction information being incorrectly entered and the same transactions to be counted more than once or twice. Another source of error is the reporting date that can be different from the actual transaction date.
Detecting duplicate transactions is a non-trivial problem that needs to be done separately for each derivative class.

The activity reports have been cleaned for credit derivatives (CDS) in a previous work \cite{demange2020market} in order to extract operations that are \textit{price relevant}, as most of the operations that are contained in the activity reports are back-office transactions such as compression and clearing that do not involve the bargaining of a new price.

This cleaning process used graph matching to de-duplicate transactions, and a lot of handcrafted rules to account for frequent reporting errors. For example, a frequent source of outlier data points was Japanese Yen amounts reported with a wrong or missing currency field, thus multiplying values by $\sim$100 when considered as dollars or euros. A lot of other outlier detection rules that we do not detail here are specific to CDS contracts and allow to correct or discard erroneous lines.

\subsection{Report merging}
\label{sub:merging}

In theory, the collateral valuation report should contain all the information about portfolio initial margins when one counterparty is French, as we accessed the data through the Bank of France. However, when one counterpart is not French, the initial margins will only appear in trade state and trade activity reports.

Thus, a significant amount of efforts was dedicated to processing state reports to augment the content of the valuation reports from the content of the state reports for all kind of assets.

Because the rest of our analysis is specific to credit derivatives, we restricted ourselves to portfolios containing only credit derivatives as it would be harder to analyse only partial portfolios.

Then, we merged those augmented valuation reports with the cleaned state reports to have for each portfolio its contents and initial margin.

This aspect of our work involved a lot of code optimization to make the processing reliable and scalable. Indeed, we ran our code on a $~$200 GB sample covering the last year and it is now used on the data received daily by the Bank of France.

\subsection{Portfolio linkage}

During our data processing, we encountered some situations where the same transaction was reported by both sides, but with different portfolio IDs.

We adopted an approach from graph theory to solve this problem. We consider each portfolio ID as a node and when a transaction is reported with two different portfolio IDs, we draw an edge between those two portfolio IDs.

\begin{figure}
  \centering
  \includegraphics{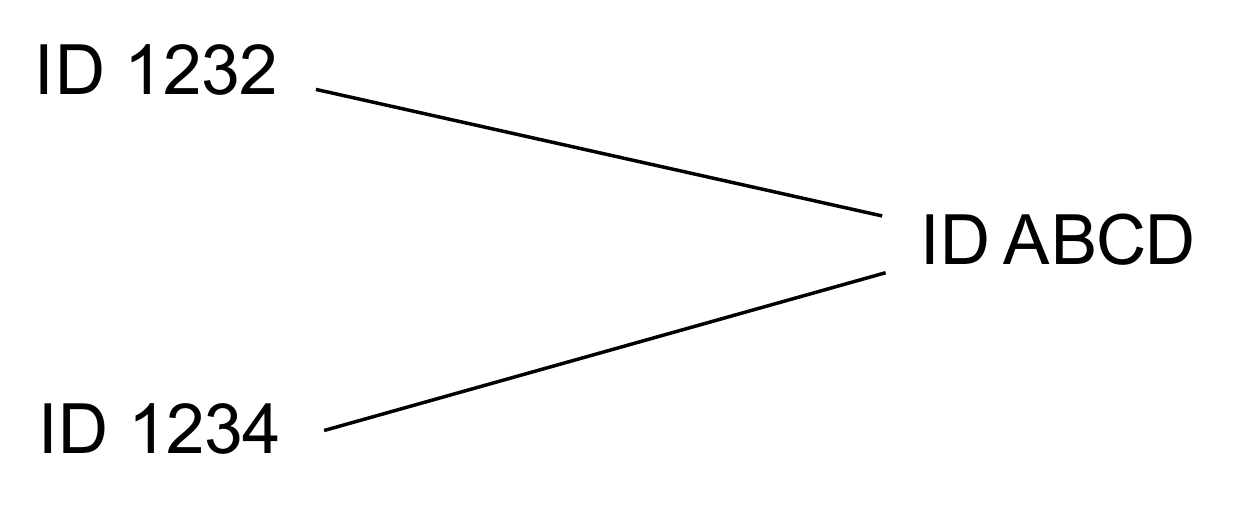}
  \caption{At least two transactions have been carried on. In the first one, one side reported portfolio ID 1234 and the other ABCD. In the second one, one side reported portfolio ID ABCD and the other one 1232.}
\end{figure}

A natural solution is to consider connected portfolio IDs between two counterparties as the same portfolio or, put simply, find the connected components in the undirected graph we defined. We use a disjoint-set data structure to track the components while reading the edges in an online fashion.

\begin{algorithm}[H]
\SetAlgoLined
    \KwResult{Lists of equivalent portfolio IDs }
    
    Initialize a disjoint-set data structure with functions $Union$ and $Find$\;
    \ForEach{transaction in reports}{
        $Union(transaction.portfolio\_id1, transaction.portfolio\_id2)$\;
    }
    Initialize a mapping from string to list of strings\;
    \ForEach{portfolio\_id}{
        Add $(portfolio\_id, Find(portfolio\_id))$ to the mapping\;
    }
    Return the values of the mapping\;

 \caption{Portfolio linkage}
\end{algorithm}

This algorithm has $\mathcal{O}(n)$ space complexity and $\mathcal{O}(n \alpha(m))$ time complexity, where $n$ is the number of nodes, $m$ the number of edges and $\alpha$ the inverse Ackermann function.

After this linkage, we have to recover the characteristics of the portfolios. The contents are easy to define, as they are simply the aggregated transactions. The initial margins are harder to characterize. The collateral valuation report gives us margins for each separate portfolio. To decide on the margin of the merged portfolio, we used voting on the aliased portfolios.

\section{Exploration and visualizations}

\subsection{Comparison of data sources}

We report in Figure \ref{fig:totalmargins} the number of portfolios retrieved from the merging step described in Subsection \ref{sub:merging}. Our processing allowed us to get much more data.

\begin{figure}
    \centering
    \includegraphics[width=.9\textwidth]{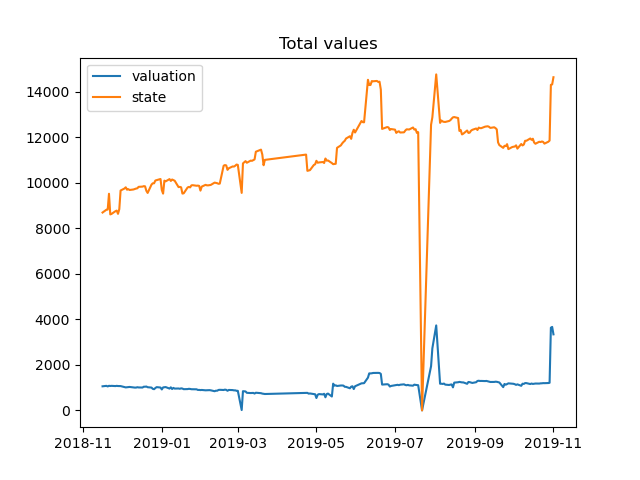}
    \caption{Number of portfolios retrieved from the original data (valuation reports) and from the post processing (state reports)}
    \label{fig:totalmargins}
\end{figure}

However, as we see in Figure \ref{fig:nonzero}, there are more portfolios reported without a margin in the state reports.

\begin{figure}
    \centering
    \includegraphics[width=.9\textwidth]{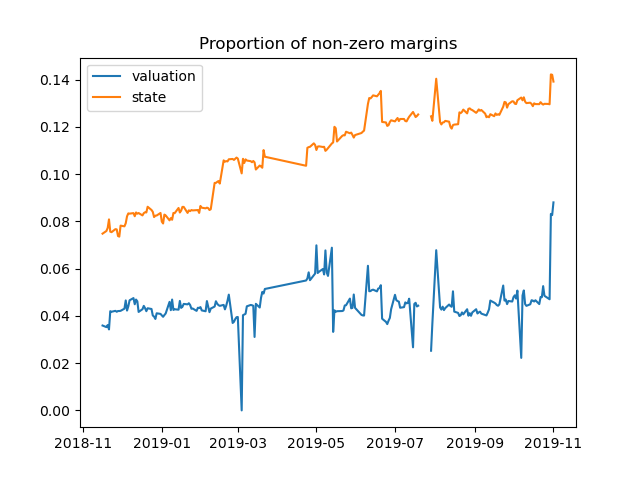}
    \caption{Proportion of zero margins}
    \label{fig:nonzero}
\end{figure}

Finally, we plot in Figure \ref{fig:quartiles} the quartiles of the distribution of nonzero margins to better represent the difference between the two sources. We observe that the margins reported in valuation reports are much larger.

\begin{figure}
    \centering
    \includegraphics[width=.9\textwidth]{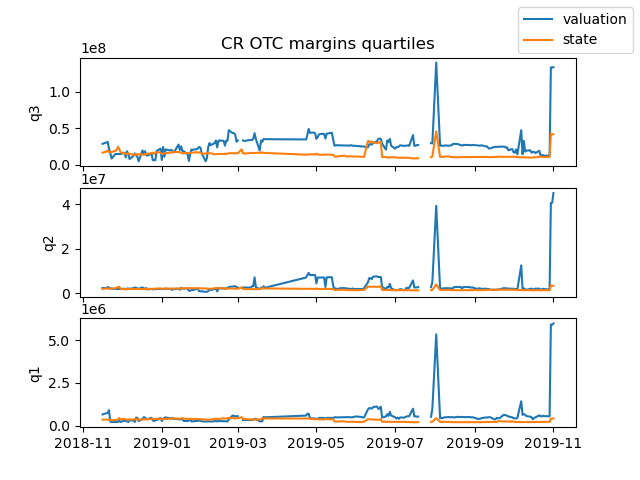}
    \caption{Quartiles of non-zero margins}
    \label{fig:quartiles}
\end{figure}

\subsection{Margin flows}

To show the coherence of the data with known market practices, we plotted the amounts received by each sector and transferred between sectors in Figures \ref{fig:imreceived} and \ref{fig:imflows}. The classification of actors was retrieved from an internal database of the Bank of France produced by hand, hence the large proportion of \textit{Unknown} actors, mostly small.

Unsurprisingly, G14 dealers (the largest 14 dealers) and CCPs are the main entities receiving margin after netting, while Banks and Funds are the providers. Interestingly, huge amounts of margins received by CCPs come from \textit{Unknown} institutions, indicating a fat tail.

Margins transfers from Banks and Funds to CCPs clearly follow a tiered process.
G14 dealers collect initial margins from banks and funds and transfer a part to CCPs. Banks also transfer directly to CCPs.

Finally, these graphs show the instability of the collected data, part of which corresponds to real-world phenomenons (e.g. periodic market events) while inconsistencies in the data collection process are probably to blame as well.

\begin{figure}
    \centering
    \includegraphics[width=\textwidth]{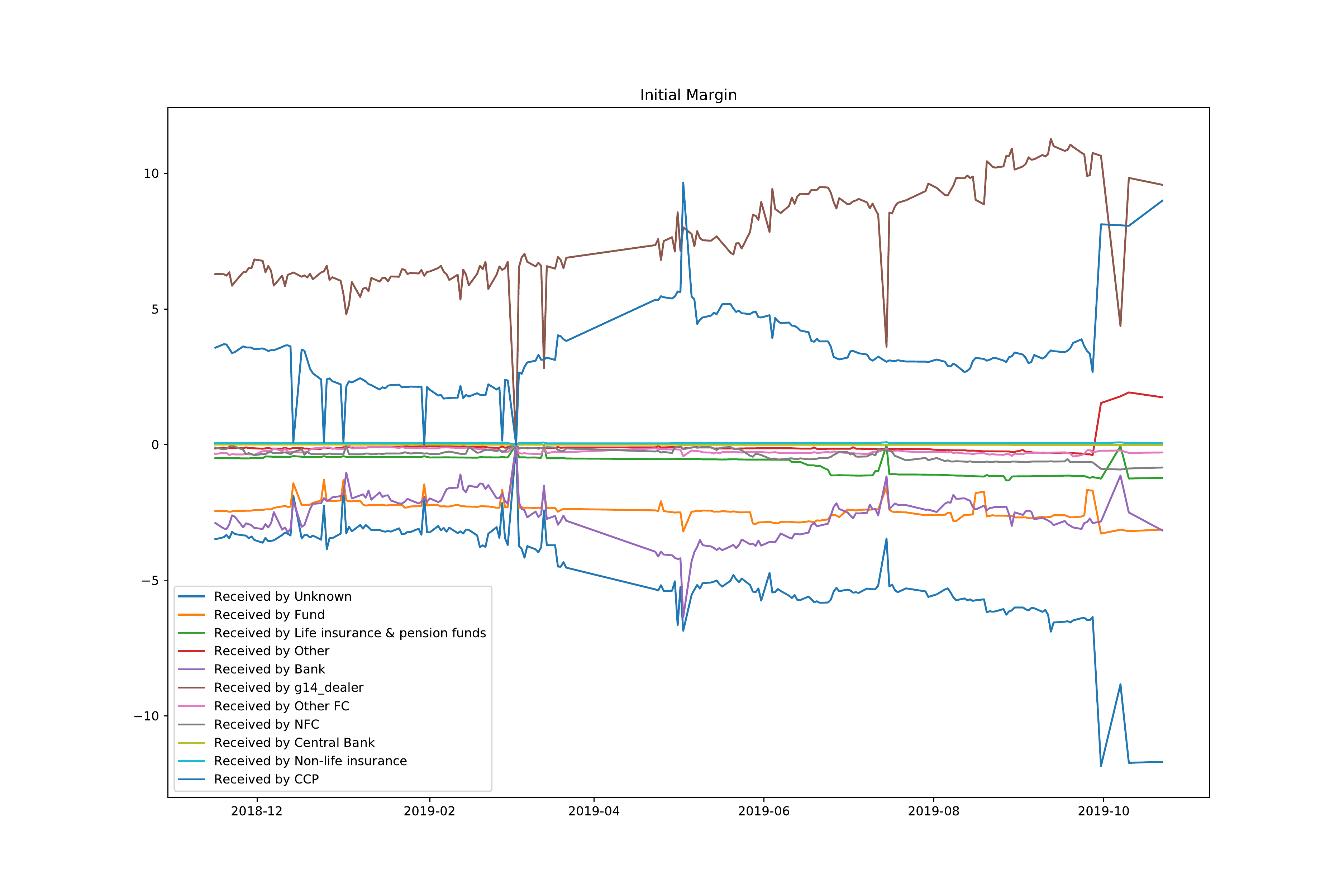}
    \caption{Net margin received by sector (in billion EUR)}
    \label{fig:imreceived}
\end{figure}

\begin{figure}
    \centering
    \includegraphics[width=\textwidth]{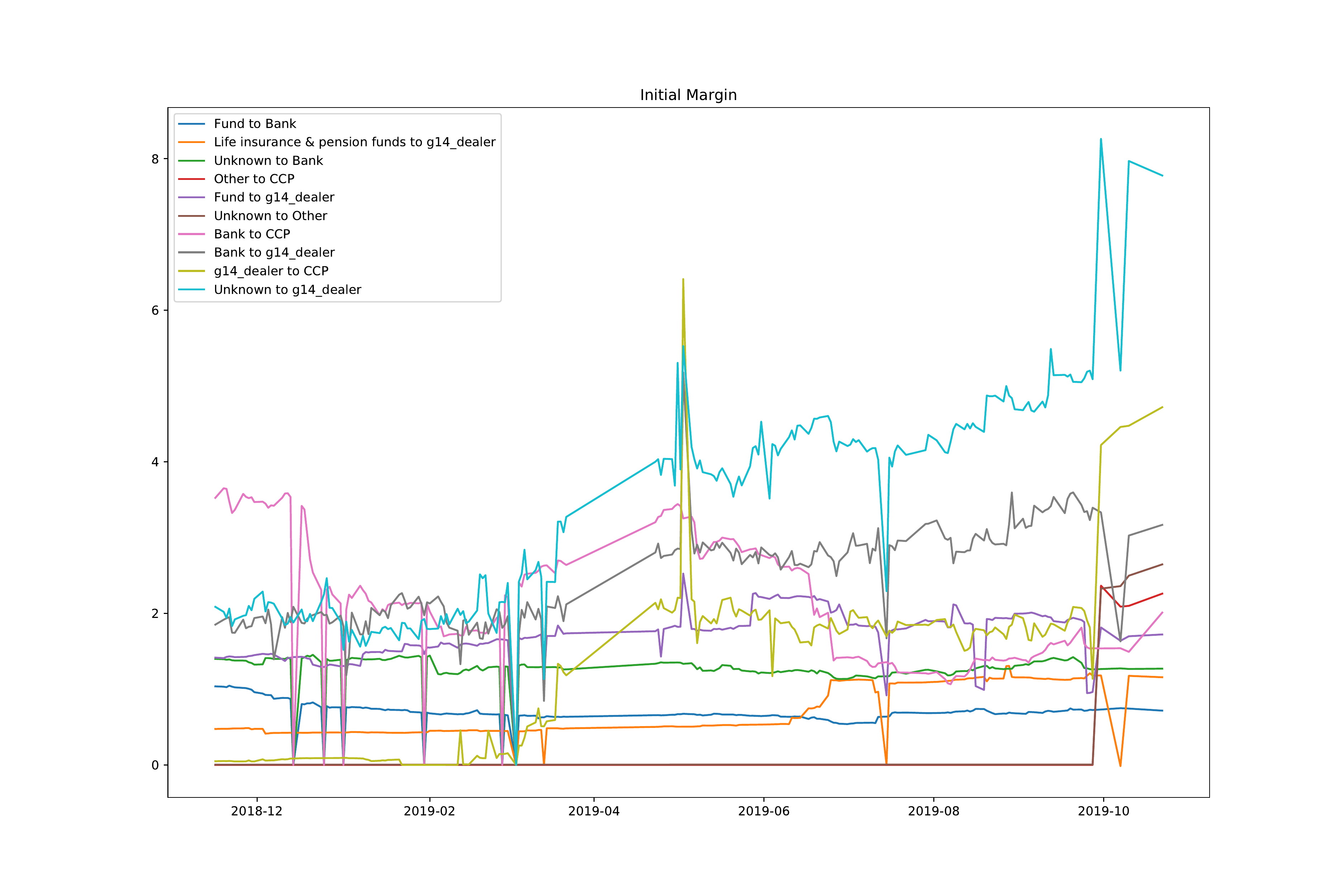}
    \caption{Net margin transfers between sectors (in billion EUR)}
    \label{fig:imflows}
\end{figure}

\chapter{Initial Margin prediction}

\section{Description}

Competition between CCPs is an important question related to financial stability. The main reason to use central clearing is to insure each counterparty against the default risk of the other. However, there are debates on the efficiency of CCPs to reduce systemic risk, as there is a possibility for the CCP itself to default, therefore exposing all its members to a greater risk.

\paragraph{Collateral requirement} To mitigate the default risk of their members, CCPs collect collateral. There are several ways collateral can be required by a CCP, examples being contributions to the default fund,  variation margins that cover position changes and initial margins. In this work, we are only interested in initial margins. Initial margins are an amount of collateral requested by the CCP to cover the liquidity risk of the portfolio. If one side of the contract were to default, the CCP would try to liquidate its position to nullify its market exposure. However, by doing so, the CCP would be subject to additional costs due to the market volatility. Hence, the initial margins are collected to cover those costs and are adjusted with the market conditions, typically the product volatility.

The study of collateral requirements is of great interest as it relates to the netting efficiency and the competition between CCPs.

\paragraph{Netting} Netting is a practice that cancels the exposure towards a CCP of institutions that present positions of equal value in two distinct directions. For example, if institution A bought 15 units from institution B and sold 10 to institution C, and cleared those transactions through a CCP, the net exposure of A to the CCP will be a long position of 5 units. In this example, netting reduces the initial margin requirements of institution A because its position is reduced. Netting can also be performed on multiple correlated products: if two products are highly correlated, the liquidity risk is lower when a portfolio has a short (resp. long) position on the first and a long (resp. short) on the second.

\paragraph{Competition} Therefore, a limited number of CCPs is preferable on one hand as they can capture more transactions and apply netting on a wider scale. On the other hand, a monopoly would be detrimental by possibly raising the clearing fees, thereby reducing the incentives to rely on central clearing. Competition is not necessarily a good thing either, with the risk of a race to the bottom of collateral requirements that would increase the systemic risk.

To study how important margins are in the eyes of clients choosing CCPs, one has to know how much each CCP charges.
However, each transaction is associated with only one CCP, the one that was chosen.

Our goal was to predict the initial margin requirements by the different CCPs and use these estimations as part of a CCP choice model. However, after thoroughly cleaning the EMIR data, we observed that our estimations were not precise enough to yield clear insight on the pricing of CCPs, thus compromising our original idea of using them in a CCP choice model.

\section{Related work}

As the prediction of margins was intended to be part of a competition analysis between CCPs, we divide this section in two parts, one about competition analysis and one about models for initial margins.

There is a large literature related to competition analysis, both between CCPs or between exchanges~\cite{cantillon2008competition}.

\subsection{Competition analysis}

An early paper on the question of competition between CCPs is~\cite{zhu2011there}. They examine manually the changes in fees of the major CCPs and draw 3 conclusions: competition led to tariff cuts, there is no fit-for-all risk management mechanism and there is no evidence that competition among CCPs has led to a deterioration in the robustness of CCPs’ risk management. They based their analysis on the 3 major CCPs operating on the pan-European equity market (LCH.Clearnet SA, EMCF and EuroCCP). It is not clear that the observed reduction of  transaction-based fees is due to competition only, as trading patterns also evolved with algorithmic trading to result with more low-value and high frequency trades. We remark that the greatest risk for CCPs is the replacement cost risk and the risk analysis presented in the paper is focused on CCPs for equity. There is a main difference between the clearing of equities and derivatives due to the insurance provided by the CCP against counterparty risk: in the case of equities, the risk is very limited because the period where the CCP bears the counterparty risk is very short. This explains why the exchange and clearing functions were usually  fully integrated. In the case of derivatives, the CCP takes long time risk.

\cite{park2016empirical} analyse the margin changes in a dataset of futures margins set by CME Group. They first want to model the margin changes after a volatility change. They also want to measure the impact of competition on margin changes.
Since margin changes are infrequent (one every few months), their model asserts that a margin changes only if an indicator rises above a certain threshold level.
They establish that variations of margin follow a EWMA estimator of volatility for futures, an effect that is particularly noticeable on currency futures. They also show that there is a negative correlation between the probability of margin shortfall and the average number of days between margin changes across different futures contract panels.
The authors first fit a linear model to predict the margins from some features.
Their second model to predict the margin changes is more original as they use a Tobit regression, i.e. a regression where the observed value of the dependent variable is censored. This model draws from the intuition that clearing counterparties want to keep human-readable fees and will update their tariffs only when the difference goes above some threshold levels that they set to the smallest historical margin changes. Their third model is a trichotomous Tobit regression (outcomes -1, 0 and 1) that separates the features in positive and negative components to investigate the symmetry of margin changes.
Like many papers in econometrics, they interpret the regression coefficients as feature importance, an issue we tackle in the next chapters.

\subsection{Margin estimation}

There is also a quite well developed, albeit old, literature on margin estimation.

\cite{knott2002modelling} provides an extended review, but the margin definitions evolved since 2002. A more recent treatment is made in~\cite{lopez2017comargin}, where the two major existing frameworks, Value-at-Risk (VaR) and SPAN are detailed. 

\paragraph{SPAN} SPAN was introduced in 1988 by the CME exchange. SPAN makes use of numeric methods to simulate 16 scenarios and assess the variation of the portfolio value. However, SPAN only sums the risks over the different assets contained in the portfolio and does not make use of correlations between assets. The complexity and innovation of the SPAN system comes from the wide range of contracts it can support, with many contract parameters, CCP charges and risk levels that can be chosen by the user.

\paragraph{Value-at-Risk} The VaR margin is the default measure for the aggregate risk exposure and  regulatory capital requirements of banks. VaR can also be used to
set margins on derivatives.
\begin{definition}
The VaR margin $M$ on an asset with return $V$ for risk level $\alpha \in[0,1]$  is defined by the equation:
$$\Pr[V < -M] = \alpha$$
\end{definition}
\cite{lopez2017comargin} extends the VaR margin framework to encompass risks at the CCP level instead of the portfolio level, that is, compute the joint probability of two portfolios exceeding their margins.

\section{Experiments}

After collecting portfolio contents as described in chapter \ref{chap:data}, we tried to predict the initial margin according to some baseline models.

\subsection{Baselines}

The initial margin covers the liquidity risk, hence only depends on the possible price variation during an attempt to nullify the position.

If one supposes that the price of CDS indices follows a normal probability distribution on a given time interval, then their difference will be a centered normal distribution. Furthermore, we have the following property:

\begin{lemma}
    \label{lemma:vol}
    For a given risk level $r \in [0, 1]$, there is a positive constant $\alpha \in \R^+$ such that for any asset following a normal price distribution with standard deviation $\sigma$, its VaR initial margin $M$ is given by:
    
    $$M = \alpha \cdot \sigma$$
    
    $\alpha$ has a simple expression following from the inverse cumulative distribution function of the standard normal distribution.
\end{lemma}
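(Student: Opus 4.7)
The plan is to reduce the defining equation of VaR for a normally distributed return to a standardization step, after which the inverse standard normal CDF gives the claimed proportionality directly. The notational clash between the $\alpha$ used as risk level in the preceding definition and the $\alpha$ used as proportionality constant in the lemma statement will be handled by working throughout with the lemma's variable $r$ as the risk level.

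First I would fix the setup: let $V$ denote the price variation of the asset on the relevant horizon, and assume $V \sim \mathcal{N}(0,\sigma^{2})$ as in the discussion preceding the lemma (price difference of two normally distributed prices on the same time interval is centered normal). Applied to this $V$, the definition of the VaR margin with risk level $r$ becomes $\Pr[V < -M] = r$.

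Next I would standardize. Writing $Z = V/\sigma \sim \mathcal{N}(0,1)$ with CDF $\Phi$, the defining equation rewrites as $\Phi(-M/\sigma) = r$, whence $-M/\sigma = \Phi^{-1}(r)$, i.e.
\begin{equation*}
M = -\sigma\,\Phi^{-1}(r) = \sigma\,\Phi^{-1}(1-r).
\end{equation*}
Setting $\alpha := \Phi^{-1}(1-r)$ gives the announced formula $M = \alpha\cdot\sigma$, and $\alpha$ depends only on $r$, not on $\sigma$ or on the particular asset, which is exactly the content of the lemma. The ``simple expression following from the inverse cumulative distribution function'' is then just $\alpha = \Phi^{-1}(1-r)$.

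Finally, I would check positivity of $\alpha$: since in any meaningful use one takes $r < 1/2$ (typical risk levels are on the order of $1\%$ or $5\%$), we have $1-r > 1/2$ and therefore $\Phi^{-1}(1-r) > 0$, so $\alpha \in \mathbb{R}^{+}$ as claimed. There is no real obstacle in this proof; the only thing that needs care is the change of variable between the risk level $r$ and the proportionality constant $\alpha$, so that the statement is not confused with the $\alpha$ appearing as a risk level in the VaR definition just above.
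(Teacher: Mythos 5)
Your proof is correct and is exactly the argument the paper leaves implicit: the lemma is stated without proof, with the remark that $\alpha$ follows from the inverse standard normal CDF, and your standardization step $\Pr[V<-M]=r \Rightarrow M=\sigma\,\Phi^{-1}(1-r)$ is precisely that intended computation, with the clash between the risk-level $\alpha$ of the VaR definition and the proportionality constant $\alpha$ of the lemma handled properly. Your added observation that positivity of $\alpha$ requires $r<1/2$ is a fair (and correct) sharpening of the lemma as stated for $r\in[0,1]$.
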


We present some baselines below, in increasing order of sophistication.

A portfolio $pf$ is a set of positions $(pr, q)$ where $pr$ is the product and $q$ is the quantity.

Because we do not have full access to the portfolio contents of all actors, we restrict our analysis to portfolios of French entities. When running the same baselines on the full dataset, the results are much less convincing because of the products that are not observed. As a byproduct, we ignore the portfolios retrieved from state reports.

\subsubsection{Product-oblivious baselines}

The first two baselines are:
$$\widetilde{IM}_1(pf) = \alpha \left|\sum_{(pr, q) \in pf} q\right|$$
$$\widetilde{IM}_2(pf) = \alpha \sum_{(pr, q) \in pf} |q|$$

$\widetilde{IM}_1$ estimates the margin as proportional to the net sum of notionals. This assumes that all products are perfectly correlated and have the same volatility.

$\widetilde{IM}_2$  uses the gross notionals. This is similar to the SPAN system that assumes some kind of independence between products. We furthermore assume that all products have the same risk.

To estimate $\alpha$, we can use a least squares regression:
$$\argmin_\alpha \sum_pf (IM(pf) - \widetilde{IM}(pf))^2$$

However, this gives poor results because the notionals are very imbalanced and do not follow a normal distribution.

Instead, one can assume log-normality and apply log-log least-squares regression~\cite{benoit2011linear}:
$$\log \widetilde{IM}_1(pf) = \log \alpha + \log  \left |\sum_{(pr, q) \in pf} q \right|$$
$$\log \tilde{IM}_2(pf) = \log \alpha + \log \sum_{(pr, q) \in pf} |q|$$

This model can simply be estimated by subtracting the means of the dependent and covariate.

\subsubsection{Sum of volatility baseline}

One can take the volatility of products into account. Applying lemma \ref{lemma:vol} and supposing that risks are measured separately for each products and summed, we establish baseline 3:
$$\widetilde{IM}_3(pf) = \alpha \sum_{(pr, q) \in pf} |q| \cdot \sigma(pr)$$

The values of the standard deviations $\sigma(pr)$ are not estimated but computed using a rolling window of daily prices retrieved from Bloomberg for the last 1000 business days. We restrict the products to indices and hence ignore the portfolios containing less than 80\% of indices (in notional).

We also estimate $\alpha$ with log-log least squares regression.

\subsubsection{Variance-covariance VaR}

Here, we suppose that products follow a Gaussian multivariate distribution. From the properties of multivariate Gaussians:
$$Var(\sum_i X_i) = \sum_i \sum_j Cov(X_i,  X_j)$$

Hence, we propose the following model:
$$\widetilde{IM}_4(pf) = \alpha \sqrt{\sum_{(pr_1, q_1) \in pf} \sum_{(pr_2, q_2) \in pf} q_1 \cdot q_2 \cdot Cov(pr_1, pr_2)}$$

Again, we compute $\alpha$ to minimize the least squares error in log space.

\subsubsection{More complex models}

We also tried more complex models that implicitly compute the VaR without knowing the deviations $Cov(pr_1, pr_2)$. Hence, it would possible to estimate the VaR of portfolios that are not mainly composed of indexes. However, for $n$ products, such a model would have to estimate $\mathcal{O}(n^2)$ coefficients, which is much larger than our number of observations.

We used the same formula as model 3: 
$$\widetilde{IM}_3(pf) = \alpha \sum_{(pr, q) \in pf} |q| \cdot \sigma(pr)$$
but tried to estimate $\sigma(pr)$ instead of giving it as an input. We cannot use the same log-log criterion as there are no efficient solvers.
This gives baseline 5:
$$\widetilde{IM}_5(pf) = \sum_{(pr, q) \in pf} |q| \cdot \alpha_{pr}$$

A direct least squares regression approach led to absurd solutions with negative coefficients. We tried non-negative least squares (using \texttt{scipy.optimize.nnls}) to ensure $\alpha_{pr} > 0$ and constrained least squares (\texttt{scipy.optimize.lsq\_linear}) to also ensure $\alpha_{pr} < 1$ so that that the margin for a product is never larger than the notional.

However, we could not make them converge because there are too many different products compared to the number of observations we had. Furthermore, since we use time-series data that track the content and margin of portfolios over time, many data points are highly correlated.

\subsection{Results}

We restricted ourselves to a subset of the dataset where each portfolio has at least one French member, and where indices represent more than 80\% of the portfolio value.
This resulted in $3115$ unique data points for $65$ different portfolios.

We report the $R^2$ coefficient of the log-log least-squares regressions.

\begin{center}
 \begin{tabular}{|c c|} 
 \hline
 Method & $R^2$ \\
 \hline\hline
 $\widetilde{IM}_1$ & 0.466 \\ 
 \hline
 $\widetilde{IM}_2$ & 0.598 \\
 \hline
 $\widetilde{IM}_3$ & 0.608 \\
 \hline
 $\widetilde{IM}_4$ & 0.500 \\
 \hline
\end{tabular}
\end{center}

We plot the data and estimates of the best model ($\widetilde{IM}_3$) in Figure \ref{fig:margin}.

\begin{figure}
    \centering
    \includegraphics[width=\textwidth]{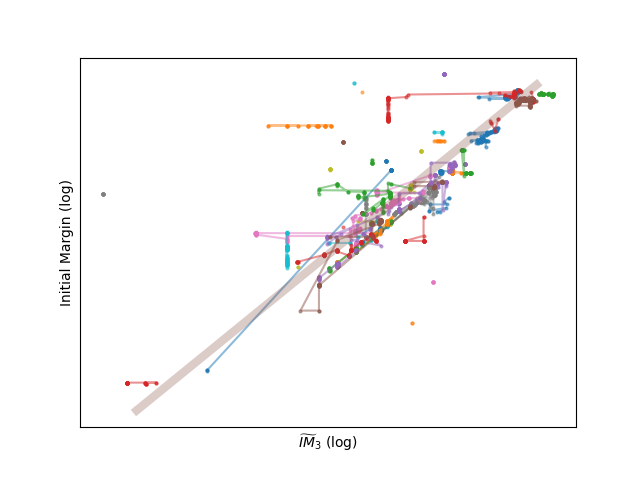}
    \caption{Initial margins versus sum of volatility. Each portfolio is represented by a different curve indicating its \textit{trajectory} over time. The estimate is represented by a straight strip. To protect the confidentiality of the amounts, we hid the scale. Here, $\alpha = 1.022 \cdot 10^{-3}$.}
    \label{fig:margin}
\end{figure}

We observe that although most portfolios fit our model well, there are some outliers that are either isolated or present some horizontal or vertical component. Vertical trajectories mean that the initial margin changes whereas the the volatility of the portfolio did not. They can be associated to compression and netting done by the CCP. Horizontal trajectories mean that the content of the portfolio changed significantly but the margin did not. The most probable explanation is that there is a delay in the update of margins or in the update of the reported values.
\chapter{Transaction graph modelling}

In this part, we are trying to model interdealer choices to understand their dynamics. Our data is a temporal graph with data on the edges. In mathematical terms, we have a set of nodes $V$ and a list of transactions $$(t, a, b, c) \in T \times V \times V \times D$$

$T$ describes temporal information, it can be described continuously with $T = \R^+$ or discretely with $T = \N$. $D$ describes the data of the transaction, i.e. the product, quantity (notional) and price (spread).

Furthermore, the nodes $V$ can be associated some other metadata $M$ from other sources. An example of such data is the spread of a CDS on the dealer (thus, a so-called single name CDS). Since the dealers are all important financial companies, they all have CDS on their debt.
Note that we study indices, not single name products, hence we do not take the exact index composition as a parameter of our model.

\section{Related work}

\subsection{Counterparty choice}

The closest work to ours is~\cite{du2019counterparty}, as they also study counterparty choice on the OTC CDS market.
They motivate this question in regard of the 2008 crisis, during which Bear Stearns and Lehman Brothers would not find any counterparties to trade with them because their risk measure was too high.

\paragraph{Data}~\cite{du2019counterparty} also base their analysis on confidential transaction level data provided by DTCC. One difference is that we don't have access to the same subset of the dataset, as they accessed it through the FRB and we accessed it through the Bank of France.  Hence, our respective analyses have the same restrictions, namely that we can only access transactions for which either one side or the reference entity of the contract is regulated by the institution we obtain the data form. They based their analysis on single-name CDS contracts, while we focused on the CDS indices that represent the largest part of our dataset.

\paragraph{Modelling} The most crucial difference is that they model the dealer choice by clients, including non-dealers. Another difference is that their model predicts the choice of a seller dealer by a buyer. This hypothesis makes less sense on the interdealer market where intermediation is a driver of transactions~\cite{li2019dealer}. On the opposite, our model follows a maximum likelihood approach to attribute a direction to the choice (sometimes the choice is made by the buyer, sometimes by the seller).

\paragraph{Methodology} On the algorithmic part, they only make use of logistic or multinomial linear regressions whereas we use multilayer perceptrons and show they (unsurprisingly) better fit the data.\\
Their results are purely in-sample, meaning they do not test or validate their predictions with data different from their training set. On the other hand, we temporally split our data and make use of Bayesian hyperparameter optimization and several regularization techniques to ensure our model learns \textit{real} patterns.\\
Finally, their conclusions are based on the amplitude of the regression coefficients whereas ours use interpretability and causality techniques framed in information theory to establish more rigorous measures of feature importance.

\subsection{Graph modelling}

Our methodology is also closely related to the wider literature of link prediction and (temporal) graph modelling.

\subsubsection{Link prediction}

The survey paper~\cite{liben2007link} defines link prediction with the following question: \textit{given a snapshot of a social network, can we infer which new interactions among its members are likely to occur in the near future?} Our modelling is inspired by this question but is different on several points: (i) we have multigraphs, i.e. two entities can be linked by more than one edge (ii) the edges contain information. In this framework, any possible pair of nodes is associated a connection weight that describes some sort of similarity between them but nothing is said about metadata.

\cite{al2006link} showed that supervised learning was a possible solution: classifiers can learn to discriminate between fake and true links.
To take care of the edge information, one could give the edge information as input when trying to predict existing edges. However, it is not clear how one could generate plausible the metadata of fake edges.

\paragraph{GAN} Some approaches used the famous framework of Generative Adversarial Networks~\cite{goodfellow2014generative} for link prediction in dynamic networks~\cite{lei2019gcn} or representation learning~\cite{wang2018graphgan}. However, our final goal is to draw insights about the behavior of actors and it is not straightforward whether a discriminator would really learn what we want. To give an example, suppose that we add totally uncorrelated inputs that are hard to generate, like high resolution images. Then the discriminator might base its judgement on the uncorrelated inputs rather than the legitimate ones. This remark motivates us to predict part of the data, and we justify this with information-theoretic arguments in section \ref{sec:ait}.

\subsubsection{Graph modelling}

\paragraph{Representation learning }A lot of papers are concerned with learning representations of either graph with edge features on one hand and temporal graphs (also called dynamic networks) on the other. Although representation learning is not the primary motivation of our approach, our method produces entity embeddings as a byproduct.

\cite{gong2019exploiting} proposes an enhanced graph neural network~\cite{scarselli2008graph}. They extend the standard graph convolutional neural network~\cite{kipf2016semi} to take multi-dimensional edge features into account.\\~\cite{goyal2018embedding, goyal2018capturing} (two similar papers by the same authors) use an auto-encoder to reconstruct edge attributes from node attributes.

In~\cite{singer2019node}, the authors develop an alignment technique for node2vec~\cite{grover2016node2vec} embeddings using orthogonal transformations, hence producing unsupervised node representations that are consistent over time.\\~\cite{xu2020inductive} allows to produce inductive representations of nodes (which means one can predict representations for nodes outside of the training set) and use time as a model input (using what they call functional time encoding).\\
From the model descriptions and experiments performed in the literature, we conclude that although the formulation looks similar, temporal graph learning does not seem to be adapted to our problem as transactions are more similar to events than some graph snapshots.

\paragraph{Interpreting graph convolutional networks} Finally, a paper of interest is~\cite{li2020explain}. They apply graph neural networks with edge weights to networks of financial transactions, making node classification experiments on the Bitcoin network. They present several interpretability methods for both informative components detection and node feature importance. In particular, one of their methods optimizes a constrained multiplicative mask applied the neighboring edges of a node to minimize the cross-entropy of the prediction.\\
It is the only reference we found on information theory and model interpretability applied  to the analysis of transaction networks.

\section{Motivations}

In this section, we explicit our problem in more details.

As stated above, our data is tabular with both continuous and discrete features. In particular, 2 categorical features are of special interest: $buyer$ and $seller$, because they are directly linked to economical concepts of competition and counterparty choice.

The fundamental question we are trying to answer is: \textit{How do transactions happen, and why?}\\
This model-independent question is intrinsically linked to the field of causality. However, we treat it with techniques from the field of model interpretability, as most of the existing literature confuses those two different concepts, and using arguments developed in \ref{sec:ait}.

The current literature related to transaction graph modelling is somewhat limited, as the closest community efforts are concerned with the weakly related concept of temporal graphs and more general benchmarks of node classification exposed in the previous section.

The starting point of our reflection is that our model should be conditional on the transaction data. Indeed, different products lead to very different transaction graphs. In this work, we are not interested in what products people trade, nor the quantities or prices, as it would probably involve studying global market trends. Instead, we put this information as an input.

We also assume that the choice to trade is ultimately made by one of the two counterparties. This hypothesis is unrelated to the classical economical question of whether trades are zero-sum (hence implying that only one of the two parties will benefit from it). We don't assume that the choice \textit{direction} is linked to any concept of benefit, e.g. that the one who chooses the other does it because they know they will benefit from the transaction.\\ Instead, we apply Occam's razor: \textit{``Accept the simplest explanation that fits the data''} (chapter 28 of~\cite{mackay2003information}).

We show in the next section that modelling the choice direction as a superposition is the same as choosing the directions that maximize the likelihood.\\
Ultimately, our derivations answer the question \textit{who chooses whom?} with \textit{the side that is the most likely to choose the other}.

Thus, we make a very strong assumption in our modelling efforts for the choice dynamics, but that is coherent with current practices in economics. Indeed, in imbalanced markets, like the client-dealer market studied in~\cite{du2019counterparty}, people usually model the choice that has the smallest entropy. On the client-dealer market, there are much more clients than dealers and it is widely assumed that clients choose their dealers. We relate this practice to our Occam's razor -- maximum likelihood principle and extend it to the interdealer market.

\section{An unsupervised objective for transaction graphs}
\label{sec:objective}

\subsection{Notations}

In this section, we use the following notations and conventions:
\begin{itemize}
    \item $t$ is the time, that is naturally associated with a filtration on the data items, which we will use in our validation process to split the data chronologically.
    \item $a$ and $b$ are the two sides of a transaction  that don't need to be symmetrical ($a$ can be the seller and $b$ the buyer)
    \item $c$ is the context of the transaction, e.g. the product, price and quantity. It can also include features that are computed from $t$, like the week day, or data from previous transactions.
    \item $c_a$ (resp. $c_b$) is the context from the perspective of $a$ (resp. $b$). We will instead write them as $c$ when they are present in conditional probabilities:
    $$\Pr[\dots | a, c] := \Pr[\dots | a, c_a]$$
    \item The notation $\Pr[\dots | a, c]$ is deliberately the same as $\Pr[\dots | b, c]$, i.e. we reflect in our notation that we consider a unique probability distribution instead of one for each side (buyer and seller). A consequence is that the bit of information indicating whether we are modelling the buyer's or seller's choice has to be included in the counterparty context $c_a$ or $c_b$. 
\end{itemize}

\subsection{Likelihood function}

Hence we are left with characterizing for each transaction $(a,b,c)$ the likelihood of the transaction.

If we consider that $a$ is the chooser (we write $a \rightarrow \cdot$), the likelihood of the transaction is naturally defined as
\begin{align*}
    L[a, b | c, a \rightarrow \cdot] &:= \Pr[b | a, c, a \rightarrow \cdot]\\
    &= \Pr[b | a, c]
\end{align*}

The first equality is the assumption of our model. The second equality comes from the fact that $a \rightarrow \cdot$ is not an observable event of our data but our interpretation of it. Hence, $a \rightarrow \cdot$ is not part of the data but an \textit{a posteriori} conception of the observer.

Likewise, the likelihood of the transaction given that $b$ chooses (we write this  event $b\rightarrow \cdot$) is $L[a, b | c, b \rightarrow \cdot]$ = $\Pr[a | b, c]$. 

The likelihood function $L$ is not \textit{sticto sensu} a probability but can be thought of as one. In particular, we define by the law of total probabilities:
$$
    L[a,b | c] := L[a,b | c, a\rightarrow \cdot] \Pr[a\rightarrow \cdot] + L[a,b | c, b\rightarrow \cdot] \Pr[b\rightarrow \cdot]
$$

\subsection{Posterior maximization}

$\Pr[a \rightarrow \cdot]$ and $\Pr[b \rightarrow \cdot]$ are part of our modelling, so we can rewrite them as coefficients:
$$
\left\{
\begin{array}{l}
    w_{a\rightarrow \cdot} := \Pr[a \rightarrow \cdot]\\
    w_{b\rightarrow \cdot} := \Pr[b \rightarrow \cdot]
\end{array}
\right.
$$

Following from them being probabilities of exclusive (virtual) events (by our assumption that one entity chooses the other), we have the constraints
$$
\left\{
\begin{array}{l}
    w_{\{a, b\} \rightarrow \cdot} \in [0, 1]\\
    w_{a\rightarrow \cdot} + w_{b\rightarrow \cdot} = 1
\end{array}
\right.
$$

In a maximum a posteriori estimation setting, one is interested in maximizing the parametric likelihood $L_w$:
\begin{align*}
    L[a,b | c] &= \max_w L_w[a,b | c]\\
    &= \max_w 
    L[a,b | c, a\rightarrow \cdot] w_{a\rightarrow \cdot} + L[a,b | c, b\rightarrow \cdot] w_{b\rightarrow \cdot}\\
   &= \max(L[a,b | c, a\rightarrow \cdot], L[a,b | c, b\rightarrow \cdot])\\
   &= \max(\Pr[b | a, c], \Pr[a | b, c])
\end{align*}

Hence, if we want to approximate $\Pr[\cdot|a,c]$ and $\Pr[\cdot|b,c]$ with a model $f_\theta$, it is natural to maximize the log-likelihood:
$$\argmax_\theta \mathbb{E}_{a,b,c}[\log \max( f_\theta(a, c)[b], f_\theta(b, c)[a] )]$$

In all our applications, $f_\theta$ is a differentiable function and we learn the parameters $\theta$ using gradient descent optimization.

\subsection{Expectation--Maximization}

Another way to see the training of objective with gradient descent is as an instance of the expectation maximization method.

The latent variable are the binary choices $\{a \rightarrow \cdot, b \rightarrow \cdot\}$ for each datum. We define a binary variable $direction$ that represent this choice. Our practical learning objective is now:
$$\argmax_{\theta, direction}  \sum_i \log
\left\{
\begin{array}{ll}
     f_\theta(a_i, c_i)[b_i] & \text{if } direction_i \\
     f_\theta(b_i, c_i)[a_i]  & \text{else}
\end{array}
\right.
$$

Supposing we use gradient descent to optimize $f_\theta$, the forward pass evaluates the model and chooses $direction_i$ for each datum. It is perfectly equivalent because the gradient of the maximum of two outputs is the gradient of the largest so the other output is simply ignored.

The backward pass computes the gradient of the loss according to $\theta$, and the optimizer locally solves
$$
\argmax_\theta \mathbb{E}_{x,y,c}[\log f_\theta(y, c)[x]] = \argmin_\theta - \mathbb{E}_{x,y,c}[\log f_\theta(y, c)[x]]
$$

where $x$ and $y$ are $a$ and $b$ swapped according to $direction$. This objective is exactly a cross-entropy.

Note that unlike the vanilla EM algorithm, we don't fully optimize $\theta$ during the backward pass and re-evaluate $direction$ at each forward pass.

\section{Modelling and optimization}

In this section, we describe the models and optimization process.

\subsection{Models}

In all our experiments, we use multilayer perceptrons.

\begin{figure}
    \centering
    \includegraphics[width=\textwidth]{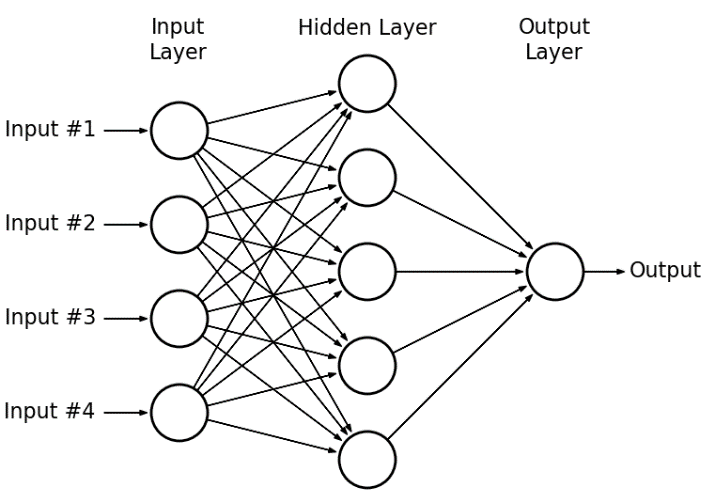}
    \caption{A typical multilayer perceptron with one hidden layer}
\end{figure}

In our experiments, we use between 0 and 2 hidden layers. When the number of hidden layers is 0, multilayer perceptrons become linear models.

We use the ReLU function as activation function, as it is well-studied and one of the most popular activation functions~\cite{nwankpa2018activation}:
$$ReLU(x) = \max(0, x)$$

Since our model has to learn probability distributions, its output is normalized by the softmax function (sometimes more accurately named softargmax):

$$
softmax(z_1, \cdots, z_K)_i = \frac{e^{z_i}}{\sum_{j=1}^K e^{z_j}}
$$

In the above equation, $K$ is the number of classes. In our application problem, it is the number of dealers.

Furthermore, to reflect the structural constraint that a dealer will never trade with itself, we manually set $z_i = -\infty$ for the choosing party during both training and inference.

\subsection{Optimization}

Our models are optimized using Adam~\cite{kingma2014adam, reddi2019convergence} with the default parameters ($\beta_1=0.9$, $\beta_2=0.999$, $\epsilon=10^{-8}$) if not for the learning rate that we fine-tune.
Adam is a gradient-based  optimizer for stochastic functions.

The optimization process is iterative and organized in epochs. During an epoch, the optimizer will be run once on each example of the training dataset. Instead of either optimizing the full objective or running on individual examples, we optimize so-called batches. The batch size is the number of samples that are simultaneously evaluated by the stochastic objective.

The batch size is an important parameter that affects both the convergence speed and the generalization ability of neural networks and can be viewed considered as a regularization technique~\cite{smith2018disciplined}. We also set it as a hyperparameter.

\subsection{Validation}

To follow the best practices~\cite{bengio2012practical}, people should use 4 different random subsets in a layered optimization process, each layer calling the previous one as a subroutine. In our setup, these subsets are

\begin{enumerate}
    \item training set for gradient descent
    \item validation set for early stopping, see the next subsection
    \item hyperparameter validation set
    \item testing set
\end{enumerate}

These practices are adapted when the quantity of data is sufficient to have a low variance relative to the random sampling of those subsets.

For smaller quantities of data, it is possible to apply variance reduction techniques like bootstrapping or k-fold cross-validation.

However, our dataset is chronological. Hence, it contains trends that can lead to overly confident estimations that cannot transpose to real-world evaluations.
For example, the COVID-19 crisis (that we didn't include in our analysis) profoundly transformed the landscape of financial markets. In particular, the prices of Chinese assets dropped in January while they regained value in March. This is an extreme example but similar changes are impossible to predict so we don't want our model to learn them. In a random sampling scenario, the model would be able, even without providing it with dates, to pick up correlations in the data that would boost its performance.

No major event happened during the period we studied, hence such major changes should not appear in our data. However, such changes can happen on a smaller scale which motivates us to split our data chronologically.

The downside of chronological data splitting is that it makes variance reduction techniques much harder to apply without seriously reducing the sizes of training sets.

Furthermore, the quantity of data we have is quite small and splitting in 4 subsets leads to poor performance.

Therefore, we split our data  chronologically in a training and an evaluation set. The training set is used by the gradient descent optimizer and the early stopping procedure while the evaluation set is used to validate the hyperparameter optimization and measure the generalization capabilities.

\subsection{Early stopping}

The objective value as a function of the number of training steps (epochs) has diminishing returns. Furthermore, training for too many epochs results in a phenomenon called over-fitting: the model learns the training set well but performs poorly on the testing set.

Early stopping is a regularization technique that limits the number of epochs. The need to reduce the training times, even at the cost of slightly decreased performance, is motivated by the gains of the hyperparameter search.

The principle of early stopping is to compute the loss value on a validation set at regular intervals and stop the training as soon as the monitored quantity increases, even if some variants exist~\cite{prechelt1998early}.

Because we don't have a validation set, our early stopping is not based on an out-of-sample validation set. Furthermore, we make heavy use of hyperparameter optimization, that was noted by~\cite{bengio2012practical} to conflict with vanilla early stopping.

Instead, our early stopping procedure aims to detect the convergence of the training loss. We define two parameters:

\begin{itemize}
    \item $\alpha \in ]0, 1]$: minimum geometric improvement
    \item $k \in \N^*$: maximum number of epochs without improvement
\end{itemize}

The algorithm is the following:

\begin{algorithm}[H]
\SetAlgoLined
    \texttt{CNT $\leftarrow$ k}\;
    \texttt{BEST $\leftarrow$ inf}\;
    \While{$CNT > 0$}{
        \texttt{train\_loss $\leftarrow$ train\_one\_epoch()}\;
        \eIf{$train\_loss < \alpha ~ BEST$}{
            \texttt{BEST $\leftarrow$ train\_loss}\;
            \texttt{CNT $\leftarrow$ k}\;
        }{
            \texttt{CNT -= 1}\;
        }
    }
 \caption{Early stopping}
\end{algorithm}

One would be tempted to set $\alpha = 1$ to make the algorithm insensitive to order-preserving loss transformations. However, we monitor the training loss that will most likely decrease frequently. Hence we use $\alpha < 1$ to effectively limit the number of training steps.

In our experiments, we use $\alpha = 0.99$. $k$ is set such that $k$ epochs train in a \textit{reasonable} time, where the definition of \textit{reasonable} depends on the time we are ready to wait. In our hyperparameter optimization setting, $k = 50$ gives effective waiting times of a few seconds.

\subsection{Dropout}

Dropout~\cite{srivastava2014dropout} is one of the most popular regularization techniques, and was crucial in improving generalization.

The principle of dropout is to randomly set some neurons in hidden layers to zero during training. Thus, for a network with $n$ neurons, dropout trains a collection of $2^n$ networks with extensive weight sharing. At test time, dropout is disabled and the inference is done by the full network.

In~\cite{srivastava2014dropout}, the authors also apply dropout on the inputs but remark that the ideal retention probability is closer to 1 than for hidden layers. Because the gains are small and we then remove inputs in our interpretability study, we don't apply dropout on the inputs as it could affect the results of our other experiments. 

In our experiments, the dropout rate is uniform across layers and is set as a hyperparameter.

\section{Hyperparameter optimization}

In the previous section, we defined several hyperparameters controlling our model.

Following the results of~\cite{bergstra2011algorithms}, we applied Tree-structured Parzen Estimator (TPE) optimization on the hyperparameter space.

We first describe how Tree-structured Parzen Estimators work, then present the implementation we used and explicit our parameter space.

\subsection{Tree-structured Parzen Estimators}

Let's assume we want to minimize a function $y=f(x)$. However, the arguments of $f$ define a configuration space that cannot be simply expressed as a product of standard sets (like finite sets, $\N$ or $\R$). This setting is exactly the case in hyperparameter optimization, where making a choice on one setting can modify the choices on others. For example, sampling a number of hidden layers modifies the number of layer sizes one has to specify.

In this case, the simplest approach is to sample each parameter independently according to the distribution of previously obtained results.

Some parameters are not present in all function evaluations, hence we only consider for each parameter the function evaluations where it was used.

Parzen estimators~\cite{parzen1962estimation, rosenblatt1956remarks} are non-parametric density estimates that are similar to Gaussian Mixtures. 

Unlike other approaches like Gaussian processes that model the conditional distribution $p(y | x)$, Parzen estimators are used to model $p(x | y)$. Note that Parzen estimators only model the value of one hyperparameter.

Since Parzen estimators are not conditional, we divide the observations in two sets, $L$ and $G$, depending on the objective value $y$. $L$ is a lower quantile and $G$ is an upper quantile. Then, we estimate two densities with Parzen estimators:

$$
\left\{
\begin{array}{ll}
     l(x) &= p(x | y \in L)  \\
     g(x) &= p(x | y \in G) 
\end{array}
\right.
$$

To choose a new value for a hyperparameter, one wants to maximize the expected improvement

$$
EI(x) = \mathbb{E}[\max(0, y^* - f(x))]
$$

In the above equation, $y^*$ is the current best objective value. We also supposed without loss of generality that our objective is to minimize $f$, as it is often the case of loss functions in deep learning.

The computations in~\cite{bergstra2011algorithms} show that $EI$ is approximated as an increasing function of $\frac{l(x)}{g(x)}$. This result is coherent with the intuition that one wants to choose values of $x$ that are much more frequent for good rather than bad objective values.

Hence, the chosen sampling approach is simply to draw a fixed number of candidates according to $l(x)$ and choose the one that maximizes $\frac{l(x)}{g(x)}$.

\subsection{Optuna and define-by-run optimization}

To make full use of the capabilities of Tree-structured Parzen Estimators that sample one variable at a time, one needs to be able to change the parameters that are sampled conditional to the values of the previous ones.

Tree-structured Parzen estimators are an instance of \textit{independent samplers}, that is, they model each parameter separately. However, the hyper-parameter space can depend on the values of already sampled parameters.

Optuna \cite{akiba2019optuna} is a define-by-run hyperparameter optimization framework implementing many state-of-the-art techniques. In particular, its API is simple to use, only requiring one to use a \texttt{trial} object to choose the parameter values.

An advantage is that some parameters can be ignored, e.g. dropout when there are no hidden layers. The full code of our hyperparameter search is outlined in Appendix \ref{code:hyperparam}

\section{Results}

\subsection{Experimental setting}

\subsubsection{Dataset}

\paragraph{Intragroup} Our dataset contains a high number of transactions that are done between entities belonging to the same group. Some entities make a large number of transactions with their own group, thus making our analysis less relevant. Therefore, we ignore those transactions.

\paragraph{Counterparts} We select the counterparts belonging to more than $1/100$ of transactions. This process gives us 19 entities that are part of 12 financial groups: Bank of America Merrill Lynch, Barclays, BNP Paribas, Citibank,
Credit Suisse, Deutsche Bank, Goldman Sachs, HSBC, JP Morgan Chase, Morgan Stanley, Société Générale, Nomura. Compared to \cite{du2019counterparty}, the only two missing dealers in our restricted dataset are RBS Group and UBS.

\paragraph{Products} We select the indices representing more of $1/100$ of transactions: CDX.EM (emerging markets, cdxem), CDX.NA.HY (North America high yield, cdxhy), CDX.NA.IG (North America investment grade, cdxig), iTraxx Australia (itxau), iTraxx Europe main (Europe investment grade, itxeb), iTraxx Europe senior financials (Europe financials investment grades, itxes), iTraxx Europe Crossover (Europe high yield, itxex), iTraxx Japan (itxjp), iTraxx Asia ex-Japan (itxxj).

\paragraph{Train/test split} We make a 80/20 split and obtain 14349 transactions between December 2018 and October 2019. Our training set has 11538 transactions and our testing set 2811.

\subsubsection{Features}

In total, we have 9 features:

\begin{itemize}
    \item \textit{entity}: categorical feature to represent the choosing entity
    \item \textit{direction}: a Boolean indicating whether \textit{entity} is the buyer or seller
    \item \textit{product}: categorical feature representing the traded product 
    \item \textit{notional}: equivalent of quantity for CDS
    \item \textit{price}: spread of the contract, equivalent to a price. We standardize it for each product.
    \item \textit{market price}: average spread of this product on the previous day on the interdealer market. This auxiliary data comes from Bloomberg. We standardize it for each product. As seen on Figure \ref{fig:price}, it is coherent with the values from our dataset.
    \item \textit{Dealer spread}: Array of the spreads for each dealer on the previous day. It indicates how the market perceives the risk of this dealer.
    \item \textit{day}: categorical variable indicating the day of the week
    \item \textit{time}: numerical variable indicating the time of the day normalized between 0 and 1
\end{itemize}

\begin{figure}
    \centering
    \includegraphics[width=\textwidth]{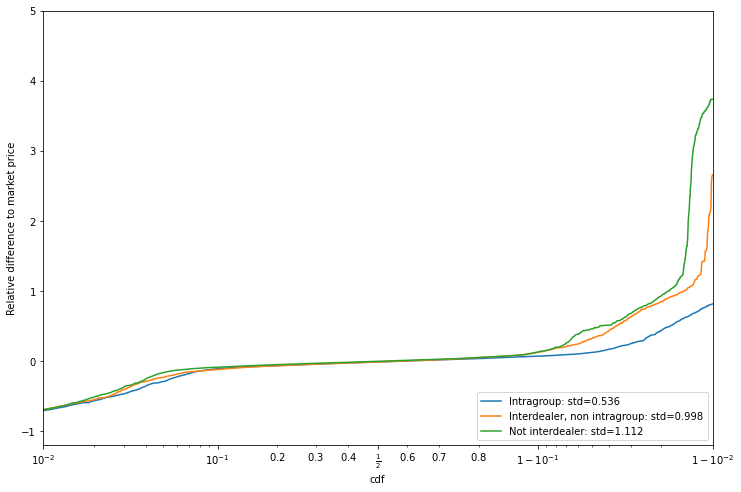}
    \caption{Interdealer market price retrieved from Bloomberg. }
    \label{fig:price}
\end{figure}

\subsubsection{Categorical embeddings}

To handle the categorical variables (\textit{entity}, \textit{product} and \textit{day}), we used embeddings \cite{guo2016entity}. The embedding dimensions are hyperparameters.

\subsubsection{Hyperparameters}

\begin{itemize}
    \item \textit{entity\_dim}: dimension of the \textit{entity} embedding, integer between $1$ and the number of entities
    \item \textit{product\_dim}: dimension of the \textit{product} embedding,  integer between $1$ and the number of products
    \item \textit{day\_dim}: dimension of the \textit{day} embedding, integer between $1$ and $3$
    \item \textit{nb\_layers}: number of layers, between $0$ (multinomial logistic regression) and $2$.
    \item layer sizes: one variable for each layer, between $32$ and $256$ (when $nb\_layers = 1$) or $128$ (when $nb\_layers = 2$). The TPE optimizes their logarithm.
    \item \textit{dropout}: probability of dropout in $[0, 1]$
    \item \textit{use\_batch}: if true, use stochastic gradient, else make updates based on the full dataset
    \item \textit{batch\_size}: if \textit{use\_batch} is true, a power of two between $512$ and $8192$
    \item \textit{lr}: learning rate, follows a log scale between $10^{-4}$ and $10^{-2}$.
\end{itemize}

\subsubsection{Implementation}

We implemented our models using the PyTorch deep learning framework. It provides efficient tensors, automatic differentiation and gradient-based optimizers like Adam.

\subsection{Training curves}

The model appears to successfully learn and generalize. We report a sample training curve in Figure \ref{fig:loss} for the best hyperparameters reported in the next subsection.

\begin{figure}
    \centering
    \includegraphics[width=\textwidth]{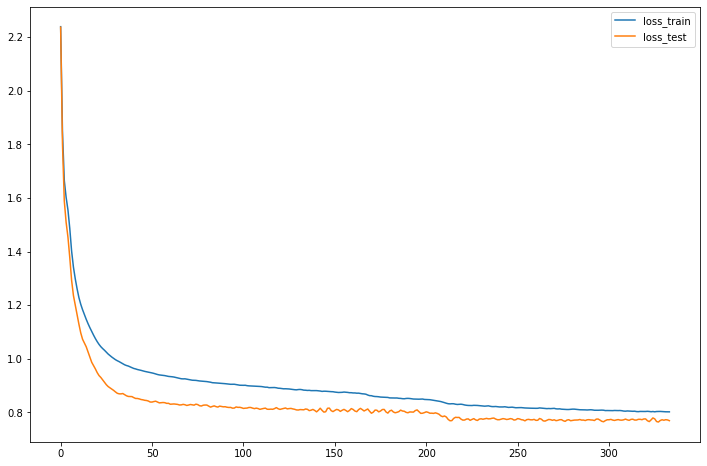}
    \caption{Loss curve for the training and testing datasets. They are correlated and decrease, which indicates that our learning objective is well-defined. The x-axis represents the epochs i.e. full pass on the dataset}
    \label{fig:loss}
\end{figure}

At first glance, it is surprising that the training loss is larger than the test loss. This can be explained both by the small dataset size, statistical artifacts in the split that make the test dataset \textit{easier}, and our extensive use of regularization, with both early stopping and dropout. One can see in Figure \ref{fig:loss_no_dropout} that without dropout, the training loss becomes indeed smaller than the testing loss.

\begin{figure}
    \centering
    \includegraphics[width=\textwidth]{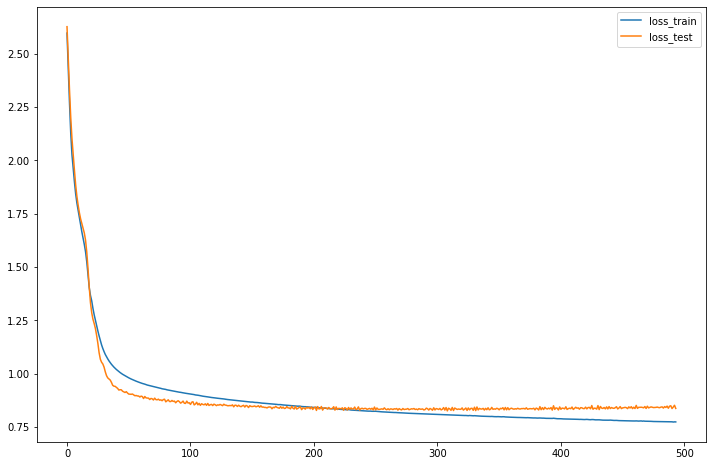}
    \caption{Loss curve for the training and testing datasets with $dropout = 0$. The x-axis represents the epochs.}
    \label{fig:loss_no_dropout}
\end{figure}

\subsection{Hyperparameter importance}

During our hyperparameter optimization process, we explored a lot of different settings. This allowed us to determine what parameters were the most important to improve our metrics.

We report our results in Appendix \ref{chap:hyper-results}.

Based on these experiments, we chose the following values:

\begin{itemize}
    \item \textit{entity\_dim}: $16$
    \item \textit{product\_dim}: $6$
    \item \textit{day\_dim}: $2$
    \item \textit{nb\_layers}: $1$
    \item layer sizes: one hidden layer of size $50$
    \item \textit{dropout}: $0.7$
    \item \textit{use\_batch}: Yes
    \item \textit{batch\_size}: $2^{12} = 4096$
    \item \textit{lr}: learning rate, follows a log scale between $10^{-4}$ and $10^{-2}$.
\end{itemize}

\subsection{Embeddings}

Finally, we remark that our training procedure produces meaningful embeddings for products. To visualize those embeddings, we apply a principal component analysis (PCA) in dimension 2 (Figure \ref{fig:pca}).

\begin{figure}
    \centering
    \includegraphics[width=\textwidth]{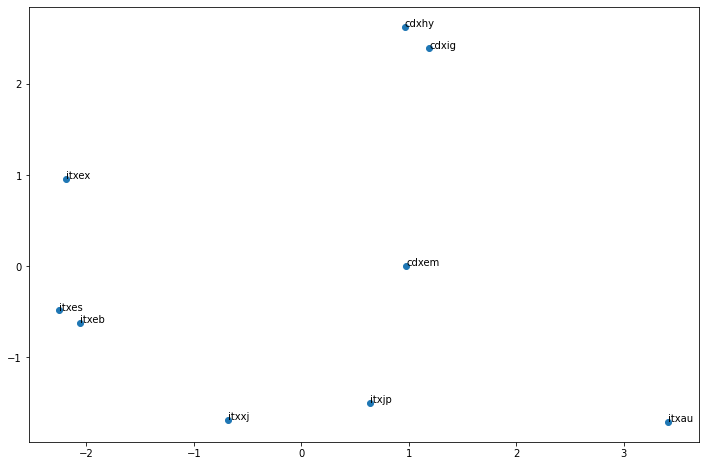}
    \caption{PCA of the products embeddings}
    \label{fig:pca}
\end{figure}

We observe that although the network never saw the characteristics of the products, it learned meaningful embeddings that group similar products together. 
For example, we see that CDX High Yield and Investment Grade are very similar. CDX Emerging Markets is closer to the other CDX indices than to the iTraxx indices. iTraxx Japan (itxjp) and iTraxx Asia ex-Japan (itxej) are grouped together as they both concern Asian markets. Finally, the 3 largest European iTraxx indices are grouped together, and the investment grade indices (itxes and itxeb) are the most similar.

On this OTC market, there are too many products with very low volumes that we ignored. However, these results seem encouraging and would motivate our approach on markets with a fat tail to learn embeddings. The results were not as meaningful for institutions, and because of the confidentiality of our dataset, we cannot report the names of the institutions which limits the interest of the figure, so we omit it.

\chapter{Data interpretability}
\label{chap:interpretability}
In this chapter, we first argue why models can be used to interpret data. We then draw the landscape of model interpretability techniques with a special focus on Shapley values.
Stepping back from the current trends in model interpretation, we motivate our approach through algorithmic information theory and define the \textit{Razor entropy}. Finally, we present \textit{top-k Shapley values}, a novel value distribution method for coalitional games that presents special properties on games with a submodular characteristic function. We then use it to present a model-agnostic feature importance evaluation algorithm that runs in quadratic time in the number of features. 

\section{From model interpretability to data interpretability}

In today's world, analyzing large quantities of data has become crucial in various fields such as physics, economics, biology or marketing.

However, as underlined by~\cite{varian2016causal}, some patterns are not observed in the data, which makes counterfactual studies (e.g. \textit{if the price of meat rose, would people consume less?}) impossible. This is where the predictive capabilities of (supervised) machine learning models can help. Indeed, a machine learning algorithm will try, with more or less success, to generalize the training samples it saw. Numerous subsequent papers ~\cite{mullainathan2017machine,  zhao2019causal, athey2019machine} also advocated for the use of machine learning models to interpret data.

Unlike data, models can be queried on instances that never happened. In this sense, the model becomes the source of knowledge. One can question the validity of this approach, we address this point in this chapter through the lens of Solomonoff's theory of inductive inference.

\subsection{Model interpretability}

Interpretable Machine Learning~\cite{molnar2020interpretable, guegan2020note} is a field that received a growing interest from the community throughout the last years.

\paragraph{Interpretability} Interpretability was defined in~\cite{miller2019explanation} as \textit{the degree to which a human can understand the cause of a decision}. This definition does not lift the veil on the meaning of \textit{understand}. Furthermore, it invokes causality. Here, we suppose that the observed variable depends on the explanatory features given as input to the model, thus that the model learns a conditional distribution.

\subsubsection{Categorizations}

Several categorizations exist for model interpretability techniques~\cite{molnar2020interpretable}.

\paragraph{Intrinsic vs post-hoc} For example, intrinsic methods rely on using simple models that are already fairly interpretable, like linear regressions or small number of rules. Intrinsic methods will typically enforce a low model complexity through the use of regularization for linear models (e.g. using Lasso~\cite{tibshirani1996regression} for sparsity) or by limiting the depth of decision trees.
On the other hand, post-hoc methods do not modify how the model is trained, even if they may use the model structure.

\paragraph{Model-specific vs model-agnostic} Model-specific methods make use of the specific model that is being used whereas model-agnostic methods just use the fact that the model supports a training and an evaluation method. The former treat the model as a so-called \textit{white box} while the latter are often referred to as \textit{black box} methods.

\paragraph{Scope of explanation} The scope of interpretability is an important characteristic: it lies on a spectrum, ranging from the local level (the method explains a single prediction) to the global level (the method allows to understand the model as a whole).

\paragraph{Explanation format}
Finally, the different interpretability techniques are categorized by how they present their results. Note that these categories are not mutually exclusive. For example, a (kernel) SVM will provide support vectors that are both model-specific but also prototypes.

\subparagraph{Model internals}
The most flexible methods, that are obviously model-specific, report the model internals. Examples are the coefficients of generalized linear models, or visualization of convolutional neural network filters~\cite{zeiler2014visualizing}. Deep learning also popularized some architectures that compute some normalized multiplicative weights in a mechanism called \textit{attention}. The values of these coefficients can also be used as part of a model interpretability process~\cite{serrano2019attention}.

\subparagraph{Feature statistic}
Other methods propose some feature statistic or summary. They include methods that provide individual or group feature importance. Some of these methods offer visualizations of the statistics they compute.

\subparagraph{Prototypes and counterfactuals}
Finally, a whole class of methods output data points from the same space as the input. These methods are either prototype based (they explain the model with some typical examples) or counterfactual (they explain specific predictions with inputs that look similar but obtain different outputs).

\subsection{Shapley values for feature importance}

In this section, we describe a very popular approach to model interpretability.

\subsubsection{Shapley values}

Shapley values come from the field of economics, where they were introduced by Lloyd Shapley~\cite{shapley1953value}.

Let us consider a finite set $N$ of elements we call \textit{players} and a function $f: 2^N \rightarrow \R$ such that $f(\{\})=0$ that we call the value or characteristic function. Together, these objects define a coalitional game.

This game can be interpreted as follows: for any subset $S \subset N$, $f(S)$ is the value of this subset and models the payoff that the members of $S$ can expect when working together.

Economists are interested in allocating a value $\phi_i$, to each player, and make this value satisfy some axioms.

\paragraph{Efficiency} The sum of all attributions is equal to the value of the total coalition:
$$\sum_i \phi_i = f(N)$$
This axiom has a practical interest when redistributing physical payoffs as it balances the earnings and payouts.

\paragraph{Symmetry or Anonymity} The identity of players is not taken into account, that is, two identically performing actors will receive the same value.\\
If for all $S \subset N$ such that $i \notin S, j \notin S$, $f(S \cup \{i\}) = f(S \cup \{j\})$ then $\phi_i =  \phi_j$.

\paragraph{Null effects} A player who does not contribute gets value $0$. If for all $S \subset N$, $f(S \cup \{i\}) = f(S)$, then $\phi_i = 0$.

\paragraph{Linearity} This axiom is the most particular as it involves another game. Let us consider player values $\phi_i'$ according to another game on the same set of players $N$ but using a different value function $g$, as well as player values $\phi_i"$ defined for the \textit{sum} game with value function $f+g$. Then they must satisfy for every $i \in N$:

$$\phi_i" = \phi_i + \phi_i'$$

To these axioms, one can add the following:

\paragraph{Monotonicity} With the above notations, if for all $S \subset N$ such that $i \notin S$, $$g(S \cup \{i\}) - g(S) \geq f(S \cup \{i\}) - f(S)$$
then $$\phi_i' \geq \phi_i$$

Shapley showed that the four first axioms are sufficient to ensure the existence and uniqueness of the solution. In fact, one can replace linearity and null effects with the monotonicity axiom without changing the result~\cite{young1985monotonic}.

\begin{definition}
The Shapley value is the average across all ordered coalitions of the marginal contribution of a player to the set of players already in the coalition. It can be computed as:
$$\phi_i = \sum_{S \subset N, i \notin S} \frac{|S|!(|N|-|S|-1)!}{|N|!}(f(S \cup \{i\}) - f(S))$$
\end{definition}

\subsubsection{Shapley feature importance}

Shapley values to measure feature importance have been popularized by~\cite{lundberg2017unified}, although they were already applied before. They named their approach SHAP for SHapley Additive exPlanations.

The goal is to explain the algorithm prediction $\tilde y$ on a single input $x$. The players $N$ are the features and the characteristic function is the prediction for feature subsets. More precisely, it is the prediction of the model when replacing the missing features with zeros.

\paragraph{Issues} There are several issues with this procedure.

\subparagraph{Filling with zeros} Replacing inputs with zeros is not always possible or a good option. For categorical features, there is no such mean value. Even when inputs are centered, zero can have no meaning, e.g. for a balanced variable taking the values $\{-1, +1\}$. This issue can be mitigated by sampling the missing values over the rest of the dataset, as first suggested (for a global procedure over the whole dataset) in~\cite{breiman2001random}.

\subparagraph{Independence} However, even sampling missing values supposes that the features are somewhat independent. As an example, suppose that one wants to make predictions on individuals from their height and weight. The procedure above would measure the contribution of weight by comparing a prediction made from both variables with a prediction made with the height and a fake weight, randomly sampled. In general, height and weight are strongly correlated but the model will be asked to make predictions on combinations it has rarely seen in the training set, thus leading to absurd interpretations.

\subparagraph{Categorical dependent variables} Finally, the target can be categorical. Then, what value should be modelled by the characteristic functions? Should it be the probability of the true class, its logarithm or the last network activation before the softmax?

\subsection{Rash\=omon effect and solutions}

Finally, the largest issue occurring in the above methodology is the so-called \textit{Rash\=omon effect}~\cite{breiman2001random, fisher2019all}. This effect is named after the 1950 Kurosawa film where the same person is described in contradictory terms by four witnesses. Similarly, different models can fit the data equally well while working very differently. Thus,~\cite{mullainathan2017machine} fit 10 LASSO regressions on the same dataset and find strong disparities in the coefficients.

This effect motivates the concepts introduced in~\cite{fisher2019all}: model reliance, model class reliance and algorithm reliance.

\paragraph{Model reliance} Model reliance is the procedure causing the Rash\-omon effect: measuring how much a single model relies on a feature is unstable because it is possible to achieve the same performance in very different ways.

\paragraph{Model class reliance} \cite{fisher2019all} advocate for model class reliance. Model class reliance is the interval of the reliance values of individual models from a given class that perform above a baseline threshold. They provide theoretical guaranties to estimate model class reliance.

\paragraph{Algorithm reliance} Finally, they define as algorithm reliance the procedures that run the model-fitting algorithm multiple times, on different subsets of features. Hence, it does not measure the reliance of a single model but instead compares the performances of multiple models outputted by the same learning algorithm.

An example of algorithm reliance is the \textit{forward stepwise} method described in~\cite{gevrey2003review}. This procedure starts from an empty set of variables and recursively adds the variable maximizing the improvement of the error value until all variables have been selected. Their algorithm is similar to the \textit{top-k Shapley value} we propose. 

A similar algorithm was used in~\cite{cohen2005feature} as a feature selection technique. Interestingly, instead of considering the marginal improvement of the metric, they use a low-order approximation of the Shapley value to choose the variables they add.

\section{An (Algorithmic) Information Theory view on variable importance}
\label{sec:ait}

The choice of models in machine learning is quite unsettling from an epistemological point of view as it supposes some congruence between the world structure (or at least the structure of the patterns appearing in the data) and the models that have been described by the literature and implemented so far.

Fortunately, there are seemingly universal tools that free us from any model choice, at least in theory. We first introduce the reader to the Solomonoff theory of inductive inference and present some elementary results of algorithmic information theory that justify its relevance for model-agnostic interpretations of data. We then apply a connection with classical information theory to motivate the use of neural networks to approximate conditional entropy as a proxy for Kolmogorov complexity.

\subsection{Introduction to Algorithmic Information Theory}

The starting point of AIT is the Church--Turing thesis that supposes the existence and universality of a concept of computable functions. Using the existence and practical definition of a universal Turing machine being able to simulate any other, one is able to associate with any computable function a minimum program \textit{length} that will be well defined for a given description language and equivalent up to an additive constant across all possible implementations of a Turing machine. The Kolmogorov complexity of data $x$ is defined as the minimum program length $K(x)$ of the computable function that takes no input and outputs this data.

Thus, Solomonoff's theory of inductive inference explains observations by supposing they were generated by the program defining their Kolmogorov complexity, that is, the smallest program that can produce those observations.

It is also possible to define a conditional Kolmogorov complexity $K(y | x)$ that measures the minimum length of a program trying to output $y$ when given $x$ as input~\cite{li2008introduction}. From now on, $y$ is a categorical variable.

However, Kolmogorov complexity is not computable.

\subsection{From Algorithmic Information Theory to Machine Learning}

In this section, we approach machine learning from the point of view of Kolmogorov complexity.

Given training and test sets, one would intuitively define the \textit{best} prediction by computing the minimal program producing $y_{train}$ from $x_{train}$ and \textit{evaluate} it on $x_{test}$. However, nothing guarantees that the program will run correctly. Also, we did not define any evaluation metric yet that can measure the success of our operation.

A possible solution is to add some conditions on the output program $P$ by enforcing that not only $P(x_{train}) = y_{train}$, but also that $P(x_{test})$ is a valid computation. Hence we see that $P$ also depends on $x_{test}$.\\
To make a comparison with classical machine learning algorithms, if some categorical feature presents classes that are only present in the testing dataset, the prediction function will probably raise an error in most implementations, unless there is a mechanism to catch those exceptions. Such a mechanism would not exist in the shortest program.

Instead, the interesting quantity to be measured is $K(y_{test} | x_{train}, y_{train}, x_{test})$. This solves all our problems. First, we do not forget about the training data that can be used at will. We included $x_{test}$ in our data so the program will run correctly on it. Furthermore, trying to output $y_{test}$ means that if we cannot find a simple pattern in the training data, we will have to encode $y_{test}$ itself in our program code. Hence, the optimal program will be a compromise between the \textit{predictable} and \textit{unpredictable} components of $y_{test}$, the unpredictable components being the similar to an error.
We note the predictable component $M$ and the error component $E$.

$$K(y_{test} | x_{train}, y_{train}, x_{test}) \approx M + E$$

Here, we see a connection to model selection. Since $K(y_{test} | x_{train}, y_{train}, x_{test})$ is an absolute lower bound on any function that can output $y_{test}$ from $x_{train}$, $y_{train}$, $x_{test}$, it applies to machine learning models. Criterion like AIC or BIC try to minimize not only the model error but the sum of the error and an estimate of the model size with a similar decomposition in a model size and an error. Furthermore, cross-entropy loss encodes exactly the additional quantity of information needed to correctly decode $y_{test}$ from the model output. Note that in this case, the model can even be trained on $y_{test}$ but just does not have access to it at inference time.

Hence, we argue that summing the cross-entropy of a machine learning classifier and its size is an effective upper bound on the conditional Kolmogorov complexity. 

This result is not surprising, as compression programs have been used for years as proxies to measure Kolmogorov complexity~\cite{cilibrasi2005clustering} and there are now programs applying deep learning to lossless compression~\cite{bellard2019lossless}.

Note that the same reasoning would not hold for regression tasks: neural networks do not consider floating-point representations as bit strings but as real values. Thus, the Hamming distance between the binary representation of $4$ and $4.321$ is 8 whereas the distance between $4$ and $32$ is 2. Values that would yield a better approximation in the space of real numbers often perform worse when looking at them from the point of view of information theory.

\subsection{Forgetting model size: the case for out-of-sample estimates of conditional entropy}

However, deep learning is most successful with overparametrized models~\cite{du2018gradient}, which compromises the use of regularization on model sizes.
Here we argue that simply measuring the generalization loss of classifiers gives a meaningful quantity.

Supposing that we can vary the length of the testing set, because we have virtually infinitely many samples, we now suppose that the test set is infinite and consider the $n$ first items: $\left(x_{test:n}, y_{test:n}\right)$

From the above intuition, $M$ should not augment with $n$ when $n$ becomes large enough because it is a pattern common to all of the data, whereas $E(n)$ depends on $n$. Furthermore, $E(n)$ being unpredictable, it should increase linearly. But because Kolmogorov complexity is bounded by linear functions of the length, it cannot increase more than linearly. Thus, we write:

$$K(y_{test:n} | x_{train}, y_{train}, x_{test:n}) \approx M + n \cdot E$$

Hence, the quantity

$$\lim_{n \rightarrow \infty} \frac1n K(y_{test:n} | x_{train}, y_{train}, x_{test:n})$$

is well defined and will converge either to $0$ or to some positive quantity that we identify with the error $E$.

A classical result in information theory now tells us~\cite{kaitchenko2004algorithms} that 

$$\lim_{n \rightarrow \infty} \frac1n K(y_{test:n} | x_{train}, y_{train}, x_{test:n}) = H(y_{test} | x_{train}, y_{train}, x_{test}) =: E$$

where $\left(x_{test}, y_{test}\right)$ are considered as finite-order stationary Markov sources and $H$ is the (conditional) entropy.

Another classical inequality in AIT is:

$$K(A|B) \lesssim K(A) \lesssim K(A|B) + K(B)$$

Hence, we can eliminate the finite-sized $x_{train}$ and $y_{train}$ in the Kolmogorov complexity limit. Similarly, by conditional independence, $y_{test}$ does not depend on the training set and we can simplify the above to:

$$\lim_{n \rightarrow \infty} \frac1n K(y_{test:n} | x_{test:n}) = H(y_{test} | x_{test}) =: E$$

However, Kolmogorov complexity is not computable in general. We want to approximate $K(y_{test} | x_{train}, y_{train}, x_{test})$ with a simpler value.
From the previous computations:

$$K(y_{test:n} | x_{train}, y_{train}, x_{test:n}) \approx n \cdot E = n \cdot H(y_{test}, x_{test})$$

Conditional entropy is frequently mentioned in machine learning as the lower bound of the cross-entropy $H(p, c)$ of a classifier $c(y|x)$ trying to learn the true distribution $p(y|x)$:

\begin{align*}
    H(y|x)
    &= -\mathbb{E}_{p(y|x)}[\log p(y|x)]\\
    &\leq -\mathbb{E}_{p(y|x)}[\log c(y|x)]\\
    &= H(p(y|x), c(y|x))
\end{align*}

Hence, the loss of a classifier is an upper bound for $E$, the marginal size of a minimal program reconstructing $y$ from $x$.

From the previous considerations on Kolmogorov complexity, it makes sense to use a training set to better approximate $E$ when being given a finite quantity of testing data $n$, as the expression we want to approximate also has access to $x_{train}, y_{train}$. Therefore, when measuring the test cross-entropy

$$H(p(y_{test} | x_{test}), c(y_{test} | x_{train}, y_{train}, x_{test}))$$

one assumes that the classifier is the result of a training process $(x_{train}, y_{train})$ that produced weights $\Theta(x_{train}, y_{train})$. Hence, we write 

\begin{align*}
    &\frac1n K(y_{test:n} | x_{train}, y_{train}, x_{test:n})\\
    &\approx \frac1n K(y_{test:n} | x_{test:n})\\
    &\rightarrow H(y_{test} | x_{test})\\
    &\leq H(p(y_{test} | x_{test}), c_{\Theta(x_{train}, y_{train})}(y_{test} | x_{test}))
\end{align*}

The above inequality has several causes:
\begin{enumerate}
    \item it is in general impossible to compute the actual value of Kolmogorov complexity, let alone the shortest program, hence we can only produce rigorous upper bounds or sketchy approximations like we did
    \item $c$ does not learn from $y_{test}$ (some semi supervised models can learn from $x_{test}$ though)
    \item the model class $c$ might be too limited and the learning process does not give an optimal solution $\Theta$
\end{enumerate}

On the other hand, to compensate the both practical and theoretical limitations of the optimization process, we close the gap by cheating and using an overparametrized model $c$, far from being the shortest program. This creates a problem as the $M$ component can grow indefinitely to learn $y_{test}$ while $E$ is underestimated (this is called overfitting).\\
Fortunately, it is possible to ignore any finite (reasonable) model size $M$ by supposing $n \rightarrow \infty$. Therefore, $\Theta$ is produced mainly from the training data, so that our trainable surrogate $c$ does not try to learn $y_{test}$ and is more likely to generalize to an infinite testing. Note that in our experiments, we allow the model to look at the test data for hyperparameter search.

Hence, we showed how classifiers can provide a reasonable proxy for Kolmogorov complexity through out-of-sample estimates of cross-entropy loss.

\section{Razor entropy}
\label{sec:razor}

Our transaction graph objective can be best understood as an instance of a more general quantity that we define below.

\begin{definition}
    Given two finite sequences $a_i$ and $b_i$ of length $n$, we define the Razor Kolmogorov complexity of $a$ and $b$ as:
    $$RK(a \square b) := min_{z \in \{0,1\}^n} K\left( (\{a_i,b_i\}[z_i])_{1 \leq i \leq n} \right)$$
    where $\{a_i,b_i\}[z_i]$ means $a_i$ if $z_i = 0$ else $b_i$.

    Given two infinite sequences $a_i$ and $b_i$, we define the Razor entropy of $a$ and $b$ as:
    $$RH(a \square b) := \lim_{n \rightarrow \infty} \frac1n RK(a_{:n} \square b_{:n})$$
\end{definition}

An intuitive way to understand this is to imagine that one tries to output any sequence of elements of either $a$ or $b$.

The following theorem motivates the definitions of section \ref{sec:objective}.

\begin{theorem}
    \label{thm:razor}
    When $a$ and $b$ are two sequences of discrete and iid random variables over a finite set $E$ such that $\Pr[\{a,b\} = \{i,j\}] = p_{ij}$ (they are un-ordered pairs),  the Razor entropy can be computed as:
    $$RH(a \square b) := \min_q - \sum_{i,j} p_{ij} \log \max(q_i, q_j)$$
    where $q$ is a probability distribution.
\end{theorem}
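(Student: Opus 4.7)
The plan is to combine the asymptotic identification of Kolmogorov complexity rate with entropy rate recalled in Section \ref{sec:ait} with an explicit optimization over the choice of $z$. By that identification, for any (measurable) selection strategy producing a stationary ergodic output $(x_t)$ one has $K(x_{:n})/n \to H_\infty(x)$ almost surely, so the theorem reduces to showing that the infimum of the entropy rate $H_\infty$ over all strategies equals $\min_q -\sum_{i,j} p_{ij} \log \max(q_i, q_j)$, the minimum ranging over probability distributions $q$ on $E$.

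For the upper bound I fix an arbitrary $q$ and consider the memoryless greedy strategy $x_t := \argmax_{k \in \{a_t,b_t\}} q_k$, with some deterministic tie-breaking rule. The output is then iid with marginal $q'$, so the Kolmogorov rate result gives $K(x_{:n})/n \to H(q')$. Gibbs' inequality yields $H(q') \leq H(q', q) = -\sum_i q'_i \log q_i$, and decomposing $q'_i$ along the pair $\{i,j\}$ from which the mass comes collapses the cross-entropy exactly to $-\sum_{i,j} p_{ij} \log \max(q_i, q_j)$. Taking the infimum over $q$ establishes the $\leq$ direction.

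For the lower bound I first treat arbitrary memoryless strategies $\pi(\cdot \mid \{a_t, b_t\})$. The output is iid with marginal $q_i = \sum_j p_{ij}\, \pi(i \mid \{i,j\})$, whose entropy expands as
\[
H(q) = -\sum_{i,j} p_{ij} \bigl( \pi(i \mid \{i,j\}) \log q_i + \pi(j \mid \{i,j\}) \log q_j \bigr).
\]
The bracketed quantity is a convex combination of $\log q_i$ and $\log q_j$ and is therefore bounded above by $\log \max(q_i, q_j)$, giving $H(q) \geq -\sum_{i,j} p_{ij} \log \max(q_i, q_j) \geq \min_{q'} -\sum_{i,j} p_{ij} \log \max(q'_i, q'_j)$. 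Combined with the rate identification, this settles the $\geq$ direction for memoryless strategies.

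The main technical obstacle is extending this lower bound to adaptive strategies, where $z_t$ may depend on the entire past $(a_{<t}, b_{<t}, x_{<t})$ or even on the full realisation of $(a,b)$. For any fixed stationary measurable strategy the output remains a factor of the iid pair source and hence stationary ergodic, so the Kolmogorov--entropy identification still applies; the convex-combination inequality above can then be reapplied pointwise to the conditional distribution of $x_t$ given the history, yielding $H(x_t \mid x_{<t}) \geq \min_q -\sum_{i,j} p_{ij} \log \max(q_i, q_j)$ and, after taking limits, the same bound on the entropy rate. The last step is to push this bound through the $\min_z$ in the definition, which is not a single strategy but a realisation-dependent minimum over binary strings; this requires a countability/measurability argument in the style of algorithmic information theory, and is the technically most delicate part of the proof.
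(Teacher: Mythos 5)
Your argument has the same skeleton as the paper's --- reduce to a per-symbol choice and then compute a cross-entropy --- but you split it into the two inequalities that the paper collapses into a single assumption. The paper's proof opens with ``we suppose that there is a deterministic function $z: E\times E\to E$, with $z(i,j)=z(j,i)\in\{i,j\}$, such that $RK(a_{:n}\square b_{:n}) = K\bigl((z(a_i,b_i))_i\bigr)$,'' and then swaps $\min_z$ with $\min_q$; the inner $\min_z$ produces $\log\max(q_i,q_j)$, which is the same observation as your convex-combination step. Your upper bound (pick $x_t=\argmax_{k\in\{a_t,b_t\}} q_k$, get an iid output, apply Gibbs' inequality, decompose $q'$ along pairs) is precisely the constructive half hidden in that assumption. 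Your lower bound for memoryless and stationary strategies, using $\pi(i\mid\{i,j\})\log q_i + \pi(j\mid\{i,j\})\log q_j \le \log\max(q_i,q_j)$ and then taking conditional expectations over the history, is correct and is in fact more than the paper attempts to prove.

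The obstacle you flag at the end is genuine, and the paper does not address it either. $RK(a_{:n}\square b_{:n})$ is a minimum over $z\in\{0,1\}^n$ chosen per realisation, which is strictly more powerful than any fixed (even randomized, adaptive, stationary) rule: in principle $z$ could be tuned to accidental regularities of the particular pair sequence to make the selected output far more compressible than any stationary output stream. The ``we admit'' in the paper is exactly the unproven claim that this freedom gains nothing in the limit. To close the gap one would need something like a type-counting or AEP argument showing that, for a $p$-typical pair sequence, no admissible output $x$ (i.e.\ with $x_i\in\{a_i,b_i\}$) can have Kolmogorov complexity appreciably below $n\min_q -\sum_{i,j} p_{ij}\log\max(q_i,q_j)$. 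Neither your proposal nor the paper carries this out, so your proof is as complete as the paper's and considerably more honest about where the reasoning stops.
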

\begin{proof}
    We suppose that there is a deterministic function $z: E \times E \rightarrow E$ such that $z(i, j) = z(j, i) \in \{i,j\}$ and such that $RK(a_{:n} \square b_{:n}) = K\left((z(a_i, b_i))_i\right)$. In other words, we admit that the optimal encoding is deterministic.\\
    We define the distribution of the target $z(a, b)$:
    $$q(z)_i = \Pr[z(a,b) = i] = \sum_{j | z(i,j) = i} p_i$$
    Thus, we deduce that the Razor entropy of $a$ and $b$ is the entropy of their optimal encoding:
    $$RH(a \square b) = \min_z H(q(z)) = \min_z - \sum_i q(z)_i \log q(z)_i$$
    Thanks to the properties of cross-entropy, we know that:
    $$H(q(z)) = \min_q - \sum_i q(z)_i \log q_i$$
    where $q$ is any distribution.
    We develop $q(z)_i$:
    $$H(q(z)) = \min_q - \sum_{i,j} p_{ij} \log q_{z(i,j)}$$
    We can plug this in the first expression, and remember that $z(i, j) \in \{i,j\}$:
    \begin{align*}
        RH(a \square b)
        &= \min_z H(q(z))\\
        &= \min_z \min_q - \sum_{i,j} p_{ij} \log q_{z(i,j)}\\
        &= \min_q \min_z - \sum_{i,j} p_{ij} \log q_{z(i,j)}\\
        &= \min_q - \sum_{i,j} p_{ij} \log \max(q_i, q_j)\\
    \end{align*}
\end{proof}

Furthermore, one can make use of additional data and replace the probability, entropy and complexity with their conditional counterpart without changing the proof. It is also possible to make the pairs $\{a,b\}$ ordered, and even use different sets for the values of $a$ and $b$ since our definitions are oblivious to the chosen encoding.

Two particular cases that are not covered by our theory are the conditioning on $z_i$, the Boolean choice that makes sense only for ordered pairs, and on $\{a_i,b_i\}[\neg z_i]$, the variable that is not predicted. We conjecture that Theorem \ref{thm:razor} still applies in this case and leave the verification for future works.

\section{A novel cost-sharing rule}

In this section, we present a cost-sharing rule that has similar properties to the Shapley value but is easier to compute in games that have certain properties.

\subsection{Top-k efficiency}

From the 4 axioms characterizing Shapley values uniquely (efficiency, symmetry, null effects and linearity), we remove the linearity. Indeed, when we only consider one game, linearity does not apply. Instead, we consider a new rule.

\paragraph{Top-k efficiency} For any coalition size $k$, the maximal value of a coalition must be equal to the sum of the $k$ largest contributions. In other terms, for every $0 \leq k \leq |N|$
$$\max_{S\subset N, |S| = k} f(S) = \sum_{i=|N|-k+1}^{|N|} \phi_{(i)}$$

where $\phi_{(1)} \leq \phi_{(2)} \leq \cdots \leq \phi_{(|N|)}$ is the order statistic of $\phi_i$. Such a set $S$ is called \textit{top-k efficient}.

\begin{remark}
    Top-k efficiency implies efficiency (take $k = |N|$).
\end{remark}

\begin{remark}
    A cost-sharing rule verifies top-k efficiency if and only if there is a permutation of the players such that for every $0 \leq k \leq |N|$, $\{1,2, \cdots, k\}$ has the highest value among all coalitions of size $k$.
\end{remark}

Thus, this axiom appears to make strong suppositions on the game. We will see that a relaxed version of top-k efficiency can be satisfied by a wide and natural class of games.

\subsection{Submodular games}

We are particularly interested in games with a submodular characteristic function.

\begin{definition}
    A submodular set function is a function $f: 2^N \rightarrow \R$ such that for every $X \subset N$, and every $x_1, x_2 \notin X$, we have
    $$f(X \cup \{x_1\}) + f(X \cup \{x_2\}) \geq f(X \cup \{x_1, x_2\}) + f(X)$$
\end{definition}

Submodular set functions occur in various real-world problems are they formalize the concept of diminishing returns. A common example is the coverage problem. They also possess an interesting property:

\begin{proposition}
    \label{prop:submodularity}
    Maximizing a monotone submodular function subject to a cardinality constraint can be done with a $1-\frac1e$ approximation factor with a greedy algorithm.
\end{proposition}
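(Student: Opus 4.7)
The plan is to prove the classical Nemhauser--Wolsey--Fisher bound. Let $f$ be a monotone submodular set function with $f(\emptyset) = 0$, let $k$ be the cardinality bound, and let $S^* \in \argmax_{|S| \leq k} f(S)$. The greedy algorithm builds $S_0 = \emptyset, S_1, \dots, S_k$ by repeatedly adding the element with the largest marginal gain, i.e.\ $S_i = S_{i-1} \cup \{x_i\}$ where $x_i \in \argmax_{x \notin S_{i-1}} f(S_{i-1} \cup \{x\}) - f(S_{i-1})$. I want to show $f(S_k) \geq (1 - 1/e)\, f(S^*)$.

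First I would establish the standard submodular telescoping inequality: for any $S \subseteq T$,
\[
f(T) - f(S) \;\leq\; \sum_{x \in T \setminus S} \bigl(f(S \cup \{x\}) - f(S)\bigr).
\]
This follows by adding elements of $T \setminus S$ to $S$ one at a time and using the diminishing-returns form of submodularity (each successive marginal gain is at most the marginal of adding to the smaller set $S$). Next I would apply this with $S = S_{i-1}$ and $T = S_{i-1} \cup S^*$. Combined with monotonicity, which gives $f(S^*) \leq f(S_{i-1} \cup S^*)$, I obtain
\[
f(S^*) - f(S_{i-1}) \;\leq\; \sum_{x \in S^* \setminus S_{i-1}} \bigl(f(S_{i-1} \cup \{x\}) - f(S_{i-1})\bigr) \;\leq\; k \cdot \bigl(f(S_i) - f(S_{i-1})\bigr),
\]
where the last step uses that the greedy choice $x_i$ maximizes the marginal gain and that $|S^* \setminus S_{i-1}| \leq k$.

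From this I would derive the recurrence $f(S^*) - f(S_i) \leq \bigl(1 - \tfrac{1}{k}\bigr)\bigl(f(S^*) - f(S_{i-1})\bigr)$. Iterating from $i = 0$ and using $f(S_0) = 0$ yields
\[
f(S^*) - f(S_k) \;\leq\; \bigl(1 - \tfrac{1}{k}\bigr)^{k} f(S^*) \;\leq\; e^{-1} f(S^*),
\]
and rearranging gives the claimed bound $f(S_k) \geq (1 - 1/e)\, f(S^*)$.

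The main conceptual obstacle is the submodular telescoping inequality; once that is in hand, the rest is an algebraic recurrence. Care is needed to handle the case where $|S^* \setminus S_{i-1}|$ is strictly less than $k$ (the bound still holds since the greedy marginal is nonnegative by monotonicity), and to note that the analysis requires only $f(\emptyset) = 0$ together with monotonicity and submodularity, both of which are assumed.
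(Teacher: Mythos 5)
Your proof is correct and is exactly the classical Nemhauser--Wolsey--Fisher argument (telescoping the marginal gains over $S^* \setminus S_{i-1}$, bounding by the greedy gain, and iterating the recurrence $f(S^*) - f(S_i) \leq (1-\tfrac{1}{k})(f(S^*) - f(S_{i-1}))$); the paper does not spell out a proof but simply cites that same reference, so your argument matches the intended one. Your handling of the edge cases ($|S^* \setminus S_{i-1}| < k$ via nonnegativity of the greedy marginal, and the use of $f(\emptyset)=0$) is also sound.
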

\begin{proof}
    See~\cite{nemhauser1978analysis}.
\end{proof}

\begin{remark}
    The greedy algorithm is oblivious to the cardinality constraint.
\end{remark}

\subsection{Top-k Shapley values}

The consequence of Proposition \ref{prop:submodularity} is that submodular games can verify a relaxed version of top-k efficiency.

\paragraph{Relaxed top-k efficiency} For every $0 \leq k \leq |N|$

$$\left(1-\frac1e\right) \max_{S\subset N, |S| = k} f(S)  \leq \sum_{i=|N|-k+1}^{|N|} \phi_{(i)}$$

\begin{definition}
    \label{def:topk}
    Top-k Shapley values. The top-k Shapley values $\phi_i$ are defined recursively along with an ordering $\sigma$ in the following way:
    $$
    \left\{
    \begin{array}{ll}
    \sigma(N) =& \argmax_i f(\{i\})\\
    \phi_{\sigma(N)} =& f(\{\sigma(N)\})\\
    \sigma(k) =& \argmax_i f(\{\sigma(N), \sigma(N-1), \cdots, \sigma(k+1)\} \cup \{i\}) \\
    &- f(\{\sigma(N), \sigma(N-1), \cdots, \sigma(k+1)\})\\
    \phi_{\sigma(k)} =& f(\{\sigma(N), \sigma(N-1), \cdots, \sigma(k)\})\\
    &- f(\{\sigma(N), \sigma(N-1), \cdots, \sigma(k+1)\})\\
    \end{array}
    \right.
    $$
    Ties are resolved arbitrarily.
\end{definition}

\begin{theorem}
    In a game with a monotone submodular characteristic function, top-k Shapley values are a cost-sharing rule satisfying efficiency, symmetry, null effects and relaxed top-k efficiency.\\
\end{theorem}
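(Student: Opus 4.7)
The statement has four parts; three are nearly mechanical once one reads off the correct structural property of the greedy construction, and the last (symmetry) is the delicate one. Throughout I would write $T_k := \{\sigma(N), \sigma(N-1), \ldots, \sigma(N-k+1)\}$ for the set built by the first $k$ steps of the recursion in Definition \ref{def:topk}, so that by construction $\phi_{\sigma(N-k+1)} = f(T_k) - f(T_{k-1})$.

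\emph{Efficiency} then falls out of a telescoping sum: $\sum_i \phi_i = f(T_{|N|}) - f(\emptyset) = f(N)$. For \emph{null effects}, if $f(S \cup \{i\}) = f(S)$ for every $S$, the marginal of $i$ is zero at every step, so the argmax can select $i$ only when all remaining marginals vanish; hence $\phi_i = 0$.

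\emph{Relaxed top-k efficiency} is the main technical point and rests on two observations. First, under monotone submodularity the greedy marginals are non-increasing: for any $x \notin T_k$,
$f(T_k \cup \{x\}) - f(T_k) \le f(T_{k-1} \cup \{x\}) - f(T_{k-1}) \le f(T_k) - f(T_{k-1})$,
the first inequality by submodularity and the second by the greedy maximality at step $k$. Consequently $\phi_{\sigma(N)} \ge \phi_{\sigma(N-1)} \ge \cdots \ge \phi_{\sigma(1)}$, so the $k$ largest order statistics are exactly the first $k$ values assigned, and $\sum_{i=|N|-k+1}^{|N|} \phi_{(i)} = f(T_k)$ by the same telescoping as before. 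Second, Proposition \ref{prop:submodularity} applied to the greedy algorithm (which is oblivious to the cardinality bound, per the remark following it) gives $f(T_k) \ge (1 - 1/e) \max_{|S|=k} f(S)$, which is exactly the required inequality.

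The hard part will be \emph{symmetry}, because strict symmetry can genuinely fail: with $|N|=2$ and $f(\{1\}) = f(\{2\}) = f(\{1,2\}) = 1$ the two players are interchangeable, but top-k Shapley assigns one of them $1$ and the other $0$ depending on the tie-break. The strongest statement one can hope to prove is a weak version: whenever $i$ and $j$ are interchangeable, their marginal contributions remain equal at every greedy step preceding the selection of either (by induction, using that symmetry is preserved under the submodular marginal-difference operation), so the argmax ties between them and the tie-breaking convention can be chosen to exchange them freely. I would make this precise by formalising the axiom in its ``up to tie-breaking'' form; the subtlety is to verify that a \emph{single} tie-breaking rule can simultaneously accommodate all symmetric pairs, which I would handle by fixing any total order on $N$ and arguing that the induced rule is equivariant under permutations that preserve $f$.
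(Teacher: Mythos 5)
Your handling of efficiency, null effects and relaxed top-k efficiency is correct and is in fact far more detailed than the paper's own proof, which consists of the single sentence that the theorem ``follows directly from Proposition \ref{prop:submodularity}.'' The telescoping identity $\sum_i \phi_i = f(T_{|N|}) - f(\emptyset) = f(N)$, the observation that a null player receives $\phi_i = f(T \cup \{i\}) - f(T) = 0$ whenever it is picked, and the chain ``submodularity plus greedy maximality $\Rightarrow$ non-increasing greedy gains $\Rightarrow$ the $k$ largest $\phi$'s sum to $f(T_k) \ge \left(1 - \frac1e\right) \max_{|S|=k} f(S)$'' supply exactly the content the paper leaves implicit. (The non-increasing-gains step is not strictly necessary: the sum of the $k$ largest $\phi$'s always dominates the sum of the first $k$ greedy gains, which already equals $f(T_k)$; your version buys the sharper fact that the two sums coincide.)

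On symmetry you diverge from the paper, and there you are right and the paper is not. With the symmetry axiom as formalised earlier in the chapter (if $f(S \cup \{i\}) = f(S \cup \{j\})$ for all $S$ avoiding $i,j$, then $\phi_i = \phi_j$), your two-player example $f(\{1\}) = f(\{2\}) = f(\{1,2\}) = 1$ is monotone and submodular, yet every tie-breaking gives one player $1$ and the other $0$, so the axiom fails; the paper's one-line proof simply never addresses symmetry, so this is a gap in the theorem as stated rather than in your proposal. One caveat on your proposed repair: a tie-break given by a fixed total order on $N$ does \emph{not} yield a rule equivariant under $f$-preserving permutations --- in your own example the transposition of the two players preserves $f$ but does not exchange the assigned values, since the rule always favours the lower-ranked player. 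The tenable weak formulations are either invariance of the \emph{set} of achievable value vectors (over all tie-breakings) under $f$-preserving relabelings, or symmetry in expectation under uniformly random tie-breaking; your observation that interchangeable players have equal marginals at every step before either is selected is the right ingredient for both.
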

\begin{proof}
    This follows directly from Proposition \ref{prop:submodularity}.
\end{proof}

\begin{theorem}
    Top-k Shapley values can be computed in $\mathcal{O}(|N|^2)$ evaluations of $f$.
\end{theorem}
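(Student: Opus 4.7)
The plan is to directly count the number of evaluations of $f$ required by the recursive procedure in Definition \ref{def:topk}, showing that the total is a quadratic arithmetic sum.

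First, I would observe that the definition proceeds in $|N|$ stages, indexed by $k = |N|, |N|-1, \ldots, 1$. At stage $k$, we have already fixed the suffix $T_k := \{\sigma(N), \sigma(N-1), \ldots, \sigma(k+1)\}$ of size $|N|-k$, and we need to determine $\sigma(k)$ by maximizing $f(T_k \cup \{i\}) - f(T_k)$ over the remaining candidates $i \in N \setminus T_k$. Since $f(T_k)$ is a constant with respect to the argmax (and moreover was already computed as part of obtaining $\phi_{\sigma(k+1)}$ at the previous stage), the argmax reduces to evaluating $f(T_k \cup \{i\})$ for each of the $k$ remaining candidates. Once $\sigma(k)$ is identified, the value $f(T_k \cup \{\sigma(k)\})$ is already in hand (it is one of the $k$ evaluations just performed) and can be reused to compute $\phi_{\sigma(k)}$ and to serve as the cached $f(T_{k-1})$ for the next stage.

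Summing over stages, the total number of evaluations of $f$ is
\begin{equation*}
    \sum_{k=1}^{|N|} k \;=\; \frac{|N|(|N|+1)}{2} \;=\; \mathcal{O}(|N|^2),
\end{equation*}
which is the claimed bound. One small bookkeeping point I would make explicit: the base case $k = |N|$ contributes $|N|$ evaluations of the singletons $f(\{i\})$, and each subsequent stage $k$ needs only $k$ fresh evaluations because the anchor value $f(T_k)$ is inherited from the preceding stage.

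I do not expect any real obstacle here; the statement is essentially a complexity accounting of the constructive recursion, and the only subtlety is confirming that no evaluation has to be recomputed across stages, which the caching argument above handles. If one wanted to be even more careful, one could note that beyond evaluations of $f$, all other bookkeeping (argmax selection, maintaining the running set $T_k$) takes $\mathcal{O}(|N|^2)$ elementary operations as well, so $f$-evaluations dominate only up to constants.
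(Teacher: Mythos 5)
Your proof is correct and follows essentially the same approach as the paper: recognize Definition~\ref{def:topk} as a greedy algorithm with $|N|$ stages, each requiring $\mathcal{O}(|N|)$ evaluations of $f$, giving $\mathcal{O}(|N|^2)$ total. Your version is slightly more explicit in noting that $f(T_k)$ can be cached from the previous stage, which tightens the count to exactly $|N|(|N|+1)/2$, but this refinement does not change the argument or the bound.
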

\begin{proof}
    Definition \ref{def:topk} defines a greedy algorithm. $|N|$ values have to be computed and each value is a max over $\mathcal{O}(|N|)$ terms, each of which requires a constant number of function evaluations. Hence the total number of function evaluations is $\mathcal{O}(|N|^2)$.
\end{proof}

\section{Applying top-k Shapley values to data interpretability}

In this section, we apply the previously defined top-k Shapley values to the unsupervised objective defined in section \ref{sec:objective}.

As a reminder, we model edges $(a, b)$ with a context $c$ to maximize the log likelihood:
$$\max_{\theta,direction} \frac1n \sum_i \log f_\theta(y_i, c_i)[x_i]$$
where $x_i, y_i = b_i, a_i~\text{if}~direction_i ~\text{else}~a_i, b_i$.

Hence, we define the set of players $N$ as the features contained in $c$. For any $S \subset N$, we note $c_{|S}$ the restriction of $c$ to the features contained in $S$ (and $c_{i|S}$ the value of $c_{|S}$ for the $i$-th item) and use the following empirical value function $f$ (not to be confused with the model $f_\Theta$):
$$f(S) := - \max_{\theta,direction} \frac1n \sum_i \log f_\theta(y_i, c_{i|S})[x_i] + C$$

Note the minus sign, as $f$ the likelihood is a decreasing function. $C$ is a normalization constant to ensure $f(\{\}) = 0$.

As justified in the previous section, the average is made over the testing set while $\Theta$ is the result of a learning algorithm on the training set.

\begin{remark}
    Our unsupervised objective is not a cross-entropy of actual variables. However, it can be understood as approximating the Kolmogorov complexity of $y$ given $x$ where $x$ and $y$ represent $a$ and $b$ swapped to make the result as small as possible.
\end{remark}

\begin{lemma}
    Conditional entropy gains are \textbf{not} submodular, as stated in remark 13 of~\cite{krause2008near}.
\end{lemma}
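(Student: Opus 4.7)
The plan is to exhibit the classical XOR counterexample, which shows that adding a feature can strictly increase the marginal information contribution of another feature, directly violating the diminishing-returns inequality that defines submodularity.

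First I would recall the definitions. In our context the value function of interest is essentially $f(S) = H(Y) - H(Y\mid X_S)$ (or equivalently, after the shift by the constant $C$ used in the previous section, $-H(Y\mid X_S) + C$), so that the marginal gain from adding feature $i$ to $S$ is the conditional mutual information $I(X_i; Y \mid X_S) = H(Y\mid X_S) - H(Y \mid X_{S\cup\{i\}})$. Submodularity of $f$ is equivalent to the condition that these marginal gains are non-increasing in $S$, i.e.\ $I(X_i; Y \mid X_S) \geq I(X_i; Y \mid X_{S\cup\{j\}})$ for all $S$ and $i, j \notin S$.

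Next I would exhibit the counterexample. Take $X_1, X_2 \sim \mathrm{Bernoulli}(1/2)$ independent, and set $Y = X_1 \oplus X_2$. Then $Y$ is independent of $X_1$ alone and of $X_2$ alone, so $H(Y\mid X_1) = H(Y\mid X_2) = H(Y) = 1$, giving $I(X_1; Y) = I(X_2; Y) = 0$. On the other hand, $Y$ is a deterministic function of $(X_1, X_2)$, so $H(Y \mid X_1, X_2) = 0$ and $I(X_1; Y \mid X_2) = H(Y\mid X_2) - H(Y\mid X_1, X_2) = 1$. This is enough to conclude: with $S = \emptyset$, $i = 1$, $j = 2$, we have $I(X_1; Y \mid X_\emptyset) = 0 < 1 = I(X_1; Y \mid X_{\{2\}})$, so the marginal gain from $X_1$ strictly increases after adding $X_2$. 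Equivalently, writing out the submodularity inequality directly in the form $f(S \cup \{i\}) + f(S \cup \{j\}) \geq f(S \cup \{i,j\}) + f(S)$ with $S = \emptyset$, we get $0 + 0 \not\geq 1 + 0$.

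The argument is essentially routine once the right example is in hand; the only mild subtlety is being explicit about which function we are calling the ``entropy gain'' (the mutual information $I(Y; X_S)$, rather than the conditional entropy $H(Y\mid X_S)$ itself, which is actually supermodular in $S$). I would add one sentence contrasting these two to prevent confusion, and cite remark~13 of~\cite{krause2008near} where the same XOR-type construction is used to make this point. No further computation is needed, and the lemma follows.
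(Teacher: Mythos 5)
Your proposal is correct and uses essentially the same argument as the paper: the XOR counterexample with two independent fair bits and $Y = X_1 \oplus X_2$, showing the marginal information gain of one bit strictly increases after conditioning on the other. The only cosmetic difference is that you phrase the violation in terms of conditional mutual information while the paper writes the same inequality directly with conditional entropies.
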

\begin{proof}
    We give a simpler proof for discrete random variables here: let us consider 2 random bits $b_1, b_2$ and $b_3 = b_1 \oplus b_2$. $H(b_3) = H(b_3 | b_2) = H(b_3 | b_1) = 1$ and $H(b_3 | b_1, b_2) = 0$. Hence, $H(b_3 | b_1, b_2) - H(b_3 | b_1) < H(b_3 | b_2) - H(b_3)$, that is $b_2$ helps more when combined with $b_1$ than alone.
\end{proof}

Hence, the theoretical guarantees of top-k Shapley values generally do not hold in this setting. However, the absence of provable submodularity should not be viewed as an obstacle to the use of top-k Shapley values, for two reasons:
\begin{enumerate}
    \item Human decision makers sometimes unconsciously apply strategies that work best when the rewards follow a law of diminishing returns (submodularity)~\cite{kubanek2017optimal}. This shows that submodularity is consistent with human intuition and should help in simplifying interpretation.
    \item Even when the reward function is not submodular but is either approximated by a submodular function~\cite{narasimhan2012pac} or satisfies a weakened form of submodularity~\cite{das2011submodular}, greedy algorithms still achieve constant factor approximation. This approach is taken by~\cite{khanna2017scalable} that performs greedy feature selection by optimizing the $R^2$ statistic, which satisfies a weakened form of submodularity.
\end{enumerate}

We define $h(S)$ as the test loss of a model taking features from $S$ as input. Since $h$ is a decreasing function (in theory) and we want to ensure $f(\{\}) = 0$, we define $f(S) := h(\{\}) - h(S)$. Furthermore, as our estimation is very noisy because of the small dataset size, we average the test values over 5 runs.

\begin{center}
    \begin{tabular}{|c c c|} 
    \hline
    Order & Feature   &$\phi$ \\
    \hline\hline
    1&Entity & 0.858\\
    \hline
    2&Product & 0.143\\
    \hline
    2&Dealer spread & 0.011\\
    \hline
    4&Price & 0.010\\
    \hline
    5&Notional & 0.023\\
    \hline
    6&Time & 0.000\\
    \hline
    7&Direction & 0.004\\
    \hline
    8&Day & -0.003\\
    \hline
    9&Market price & -0.028\\
    \hline
\end{tabular}
\end{center}

We observe that $\phi$ globally decreases, although some error exists as we use experimental values.

It is also possible to draw all marginal improvements that were computed, i.e. the values of $f(S \cup \{i\}) - f(S)$ for all $S$ that are top-k efficient and $i \notin S$. We report the results in Figure \ref{fig:delta}.

Obviously, the choosing entity is the most important feature, as different dealers trade differently. The product is second, as dealers specialize in different products. 

Dealer spreads come third, which is coherent with the findings of \cite{du2019counterparty}. Dealer preferences change over time and spreads are correlated with how close a sample is to our testing set, hence one could argue that dealer spreads are indirectly correlated with dealer choice, being a mere proxy for time. However, one can notice that price comes just after, with a very similar top-k Shapley value, even though it contains less temporal information. This suggests that both dealer spreads and prices are directly useful to the prediction and not an artifact of our temporal splitting.

It is particularly interesting to note that the market price is not taken into account at this point and has a very low importance. It means that the price information merely reflects market trends and not a difference with a "fair" price. The marginal contribution of notional grows after learning the price, which is hard to explain other than by spurious correlations.  Finally, the top-k Shapley values of subsequent features are much less and much closer to being noise.

\begin{figure}
    \centering
    \includegraphics[width=\textwidth]{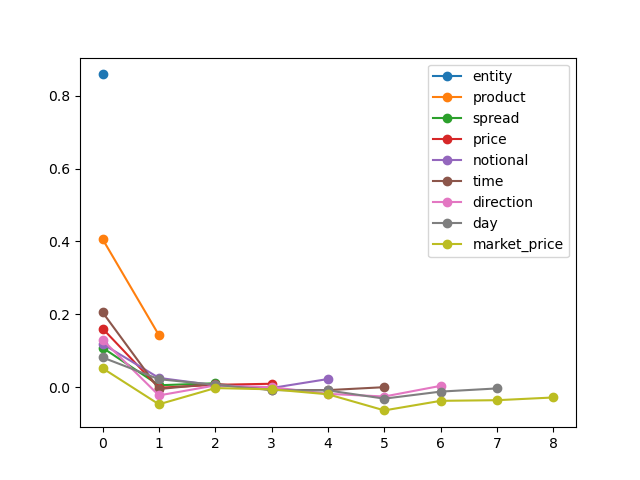}
    \caption{Marginal gains of test loss for each feature. Each feature is associated a different curve displaying its improvement over one of the top-k efficient subsets for all values of $k$.}
    \label{fig:delta}
\end{figure}
\chapter{Improvements to \textit{node2vec}}

A possible extension of our model would process a recent history of transactions, including those done with non-dealer entities, for example using recurrent neural networks instead of multilayer perceptrons.  This could allow to better explore intermediation on the OTC market \cite{li2019dealer}. 

However, non-dealers trade infrequently, which can hinder the learning process. We propose to use \textit{node2vec} embeddings as a feature.
The graph would have tens of thousands of nodes, which could be challenging if it was not sparse.

In this chapter, we describe improvements we made to the \textit{node2vec} algorithm \cite{grover2016node2vec} to make it more scalable for large graphs.
First, we explain the original node2vec algorithm. Then, we present our improved neighbor sampling algorithm. Finally, we describe our implementation.

\section{Node2vec}

\textit{Node2vec} is a common technique used to produce low-dimensional representations of nodes in a directed or undirected graph. The idea of node2vec is that a node should be embedded similarly to its \textit{neighborhood}. The neighborhood of a node is defined by using random walks. Given these random walks, one can apply the \textit{word2vec} algorithm \cite{mikolov2013efficient} to compute representations that try to maximize a negative sampling version of Skip-gram.

\subsubsection{Skip-gram}

Given a sequence of words $w_1, w_2, \cdots, w_T$, and a window size $c$ that defines the context, the objective is to maximize $$\sum_s \sum_{t=s-c}^{s+c}\log  p(w_t | w_s) $$

\subsubsection{Negative sampling}

An obvious parametric model for $p(w_t | w_s)$ would be $$p(w_t | w_s) \sim \exp(v_{w_s} \cdot v_{w_t})$$
However, the computation of the normalization constant is not usually tractable. Therefore, the alternative chosen by word2vec is to train a binary classifier distinguishing $w_s$ from words coming from a \textit{noise} distribution $\mathcal{D}$. The noise distribution is usually the unigram distribution raised to the power $0.75$. The formula is the following:

$$\log p(w_t | w_s) \approx \log \sigma(v_{w_s} \cdot v_{w_t}) - \mathbb{E}_{w\sim\mathcal{D}} [\log \sigma(-v_{w_s} \cdot v_{w})]$$

The expectation is approximated with a constant number of samples, e.g. 5.

\section{Improved neighbor sampling}

\subsection{Biased walks sampling}

To formulate node embeddings in a graph as word embeddings in sentences, one uses random walks. Node2vec uses second-order Markov chains.

Starting from a node, one produces a random walk by repeatedly sampling a neighbor of the last visited node. The walk is biased using two parameters $p$ and $q$.

Supposing we just traversed the edge $(t, v)$, and are taking a decision in $v$, we assign a transition probability to $(v, x)$ that is \textit{proportional} to $weight(v, x) \cdot \alpha(t, x)$. $weight(v, x)$ is the weight of this edge and 

$$
\alpha(t, x) = 
\begin{cases}
    \frac1p & \text{if } t = x \\
    1 & \text{if } dist(x, t) = 1 \\
    \frac1q & \text{if } dist(x, t) = 2
\end{cases}
$$

\begin{figure}
    \centering
    \includegraphics[width=.5\textwidth]{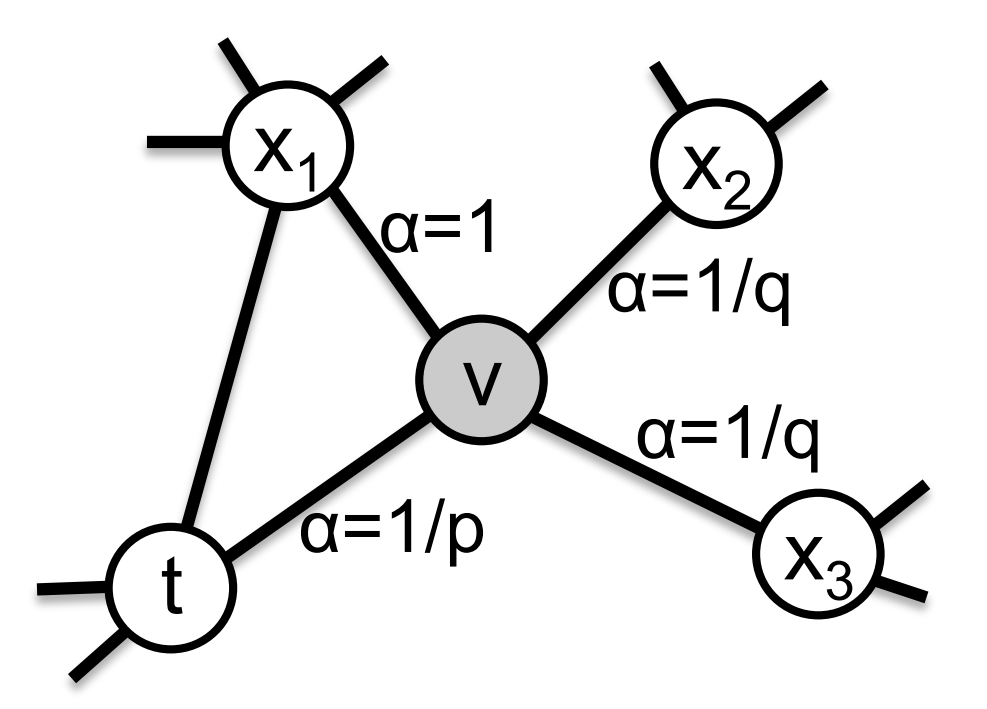}
    \caption{We just traversed the edge $(t, v)$, and are taking a decision in $v$. Depending on the values of $p$ and $q$, one is more likely to go back or forward.}
    \label{fig:walk}
\end{figure}

The problem of the above approach is that the transition probabilities depend on the two last visited nodes. The authors suggest to store them for every such pair first, which approximately multiplies the space complexity by the average degree $\bar d$. In that case, it is possible to  sample in $\mathcal{O}(1)$ using the alias method \cite{walker1977efficient}.

Without additional space, it is still possible to compute the probabilities on the fly in $\mathcal{O}(\bar d)$ by merging sorted adjacency lists. We remark that in most use cases, the model is trained for a small number of epochs, and thus this sampling method should be preferred.

\subsection{Rejection sampling}

Instead, we propose a method based on rejection sampling.  Rejection sampling can be stated formally by the following lemma.

\begin{lemma}
    Given a discrete probability distribution $\Pr[X = i] = p_i$, a sampling function $f$ for this distribution, the ability to sample uniform random variables in constant time and weights $0 \le w_i \le 1$, one can sample from a distribution $\Pr[X = i] \sim p_i \cdot w_i$ in $\mathcal{O}\left( \frac{1}{\mathbb{E}_{i \sim p_i}[1-w_i]} \right)$ expected calls to $f$.
\end{lemma}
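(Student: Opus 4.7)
The plan is to apply classical rejection sampling with $f$ as the proposal sampler. The procedure is: call $f$ to obtain a candidate $i^{*} \sim p$, draw an independent uniform $U$ on $[0,1]$, accept and return $i^{*}$ if $U \leq w_{i^{*}}$, otherwise discard and retry.

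For correctness, conditional on acceptance in a given trial, the textbook rejection-sampling identity gives $\Pr[X = i \mid \text{accept}] = p_{i} w_{i}/\sum_{j} p_{j} w_{j}$, which is proportional to $p_{i} \cdot w_{i}$ and so matches the claimed target distribution.

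For complexity, each trial uses exactly one call to $f$ and accepts independently with probability $\alpha := \sum_{j} p_{j} w_{j} = \mathbb{E}_{i \sim p_i}[w_i]$, so the number of trials until the first success is geometric with mean $1/\alpha$. Rewriting $\alpha = 1 - \mathbb{E}_{i \sim p_i}[1-w_i]$ and setting $x := \mathbb{E}_{i \sim p_i}[1-w_i] \in [0,1]$, the expected number of $f$-calls is $1/(1-x)$; combined with the elementary inequality $1/(1-x) \leq 1/x$ valid for $x \in (0,1/2]$, this is $\mathcal{O}\!\left(1/\mathbb{E}_{i \sim p_i}[1-w_i]\right)$ in the regime of interest (the biased walks in node2vec, where most of the $\alpha(t,x_i)$ are close to $1$ so $x$ is small).

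The main obstacle is the last step — the passage from the intrinsic rejection-sampling quantity $1/\alpha = 1/(1-x)$ to the form stated in the lemma. This relies on the elementary inequality above together with the implicit hypothesis that acceptance dominates rejection, which is the setting for which the lemma is invoked; without such a hypothesis the two quantities can diverge. All remaining ingredients (construction of the sampler, conditional-probability correctness, and the geometric-mean formula) are entirely standard.
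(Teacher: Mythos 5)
Your construction and correctness argument are exactly the paper's: propose a candidate by calling $f$, accept it with probability $w_{i}$, and observe that the accepted sample has law proportional to $p_i w_i$. Where you diverge is instructive. Your complexity computation is the correct one: each trial succeeds independently with probability $\alpha=\mathbb{E}_{i\sim p_i}[w_i]$, so the number of calls to $f$ is geometric with mean $1/\alpha = 1/\bigl(1-\mathbb{E}_{i\sim p_i}[1-w_i]\bigr)$. The paper's own proof instead asserts that the probability of failure is $\mathbb{E}_{i\sim p_i}[1-w_i]$ and that ``the expected success time is its inverse,'' which swaps the roles of success and failure: the expected number of trials is the inverse of the \emph{success} probability. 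The bound displayed in the lemma inherits this slip, and as literally stated it is false in general: take $w_i=\epsilon$ for every $i$; then $\mathbb{E}_{i\sim p_i}[1-w_i]=1-\epsilon$, so the claimed bound is $\mathcal{O}(1)$ calls, while the true expectation is $1/\epsilon$. The intended statement is $\mathcal{O}\bigl(1/\mathbb{E}_{i\sim p_i}[w_i]\bigr)$, which is precisely what you proved, and it is also the form actually used downstream in the node2vec application, where the acceptance probability is bounded below via the constant $\beta$.

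Consequently, your final step --- invoking $1/(1-x)\le 1/x$ for $x\in(0,1/2]$ together with an ``implicit hypothesis that acceptance dominates rejection'' --- should be dropped rather than kept as a patch. That hypothesis appears nowhere in the lemma, so with it you are not proving the statement as written, and without it the statement cannot be proved because it is not true; moreover the patch weakens your own bound, since $1/(1-x)$ is the exact expected cost. The right resolution is to report the bound with $\mathbb{E}_{i\sim p_i}[w_i]$ in the denominator (or equivalently $1-\mathbb{E}_{i\sim p_i}[1-w_i]$); with that correction your proof is complete and coincides with the paper's intended argument.
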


\begin{proof}
    The following procedure works:
    \begin{itemize}
        \item draw a sample $i$ using $f$
        \item draw a uniform random variable $r \in [0, 1]$
        \item if $r < w_i$, return $i$
        \item else, repeat
    \end{itemize}
    The probability of returning $i$ is proportional to $p_i \cdot w_i$, hence the algorithm is correct.
    The probability of failure is $\mathbb{E}_{i \sim p_i}[1-w_i]$, the expected success time is its inverse.
    Figure \ref{fig:rejection} summarizes the situation in a graphical way.
\end{proof}

\begin{figure}
    \centering
    \includegraphics[width=0.5\textwidth]{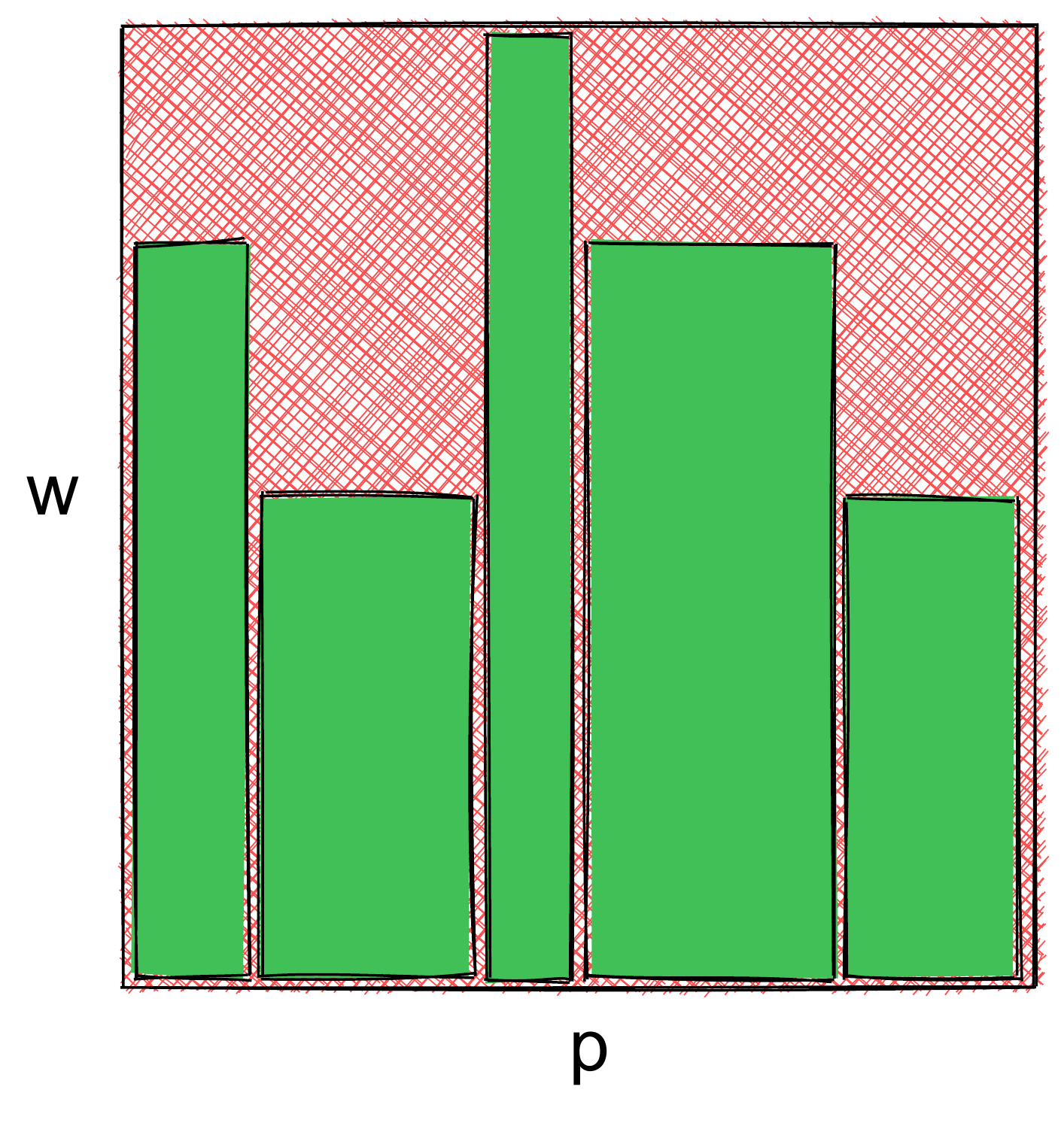}
    \caption{The algorithm first chooses a green bar $i$ according to the discrete distribution $p$, then compares its height $w_i$ with $r$ and returns $i$ or fails accordingly. The green area is the success area, the red area represents failure.}
    \label{fig:rejection}
\end{figure}

\subsection{Application}

We want to compute the next neighbor after visiting edge $(t,v)$. To apply rejection sampling, we take $p_x \sim weight(x, v)$ where $p_i$ was defined in the previous section and $weight$ denotes possible graph weights that may be oriented. In an unbiased graph, $weight(v, x) = 1$ for all neighbors $x$, and $p_x = \frac{1}{deg(v)}$ for normalization. We also choose $w_x = \alpha(t, x)$. Thus, $p_x \cdot w_x = weight(v, x) \cdot \alpha(t, x)$.

This procedure uses a linear time and space preprocessing that computes the normalized first-order transition probabilities from each node.

To evaluate the time complexity, we consider that sampling a neighbor can be done in constant time (using the alias method). Computing $w_x = \alpha(t, x)$ is a bit more costly, as one has to test whether $x$ is a neighbor of $t$. This can be done in $\mathcal{O}(\log deg(t))$ using binary search on sorted adjacency lists.

Finally, rejection happens with a probability at most $\beta = \max \left\{a/b | a,b \in \{1, p, q\}\right\}$. For typical values of $p$ and $q$ reported in the paper, $\beta < 4$ and can be considered constant.

Thus, we apply a $\mathcal{O}(n \log n)$ preprocessing and can sample in expected  $\mathcal{O}(\log D)$ time with $D$ the maximum degree.

\section{Implementation}

We implemented our modified node2vec sampling in Python.

\subsubsection{JIT compilation}
Generating the walks is time consuming as it uses loops that are slow in Python. We used \texttt{numba} \cite{lam2015numba}, a jit compiler. 
We store the graph using its adjacency matrix in compressed sparse row format.

We achieve performances such that the walk generation is unnoticeable when used to feed the word2vec training.

\subsubsection{Word2vec}

We used the \textit{word2vec} implementation provided by the \texttt{gensim} package ~\cite{vrehuuvrek2011gensim}. This implementation is widely used and tested and supports parallelization.

We exploited inheritance to produce a class that calls our random walks generation function from the parallel sections of the gensim code. Thus, when using parallelization, each thread will generate its own random walks. This avoids having to allocate processes separately for the walk generation and word2vec training.

\subsubsection{Difference with the  original implementation}

Because of the limitations of their pure Python implementation of the walk generation, the authors of the paper suggest to pre-compute and store the random walks. This uses considerable storage: when computing $20$ walks per node, each of length $100$ (values taken from their paper), the user effectively stores $2000$ times each node. Furthermore, as they noted, their algorithm is biased by the choice of walks.

Unlike the original implementation, we do not use pre-computed walks. Instead, each iteration of the word2vec training will generate one new walk per starting node. This choice improves both the storage use and the convergence of the algorithm.
\chapter{Conclusion}

In this work, we aimed to study the dynamics of counterpart choice on the over-the-counter market. We cleaned and used a dataset of credit default swap transactions containing information about central clearing.

We first used this dataset to model the initial margins on portfolios consisting of multiple products. Our analysis showed that the data quality is insufficient to infer the dynamics of initial margin requirements by central counterparts.

Then, we questioned the choice of counterparts on the interdealer market.
We derived a semi-supervised likelihood function for transaction data that can effectively be used to train end-to-end differentiable models. We trained deep learning models on our dataset, providing experimental evidence that our objective captures meaningful information and produces interpretable embeddings of categorical edge and node features.

On the theoretical side, we tackled the epistemological question of data interpretability and motivated the application of conditional cross-entropy to study real world data through the lens of algorithmic information theory, as well as the use of neural networks to measure it.

Borrowing ideas from algorithmic information theory, we defined the \textit{Razor entropy}, a principled measure of information for pairs of variable. We proved that its formula matches the likelihood function we used on our transaction dataset.

Finally, we applied game theory to create a new method of payoff distribution that provides unique properties for games with monotone submodular characteristic functions. Our \textit{top-k Shapley values} can be used to model the algorithm reliance on each explanatory variable. On our dataset, we concluded that dealer choices are mainly motivated by prior structural affinities. We also confirmed the hypothesis that dealer spread is a factor of differentiation, albeit the size of our dataset does not allow for precise estimations.

\subsubsection{Future work}

There are several ways in which our work could be extended.

Our models can be used to study intermediation on the OTC market, by evaluating recurrent models on a transaction history. As a byproduct of our work, we produced a more efficient \textit{node2vec} algorithm and implementation.

Communication and information sharing between dealers could also be researched by using graph neural networks to pass messages between nodes and observe the algorithm reliance on those connections.

On the theoretical part, we conjecture that theorem \ref{thm:razor} can be extended as explained.

We showed that the \textit{Razor entropy} can be used to learn node and edge representations on graph and temporal networks. Further research would be needed to compare its performance to other methods.

Finally, our protocol applying \textit{top-k Shapley value} to measure algorithm reliance is suitable for numerous use-cases of data interpretation.

\appendix
\appendixpage

\chapter{Model training code}
\label{code:train}

\lstinputlisting[language=Python]{train.py}
\chapter{Hyperparameter search code}
\label{code:hyperparam}
\lstinputlisting[language=Python]{hyperparam.py}
\chapter{Hyperparameter search results}
\label{chap:hyper-results}

\begin{figure}
    \centering
    \includegraphics[width=\textwidth]{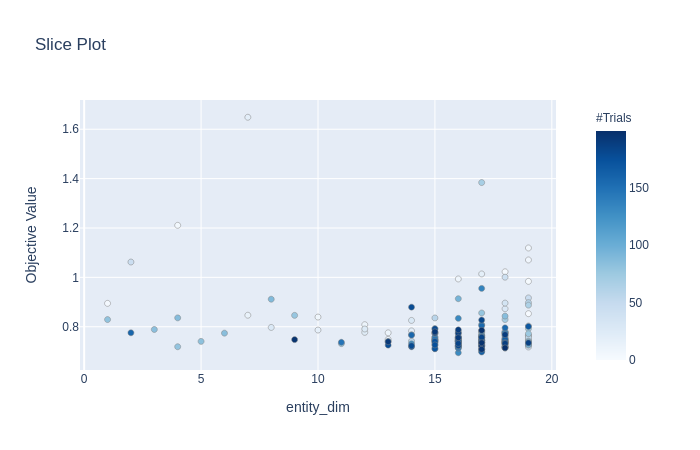}
    \caption{Entity dimension hyperparameter search. We chose $16$.}
\end{figure}

\begin{figure}
    \centering
    \includegraphics[width=\textwidth]{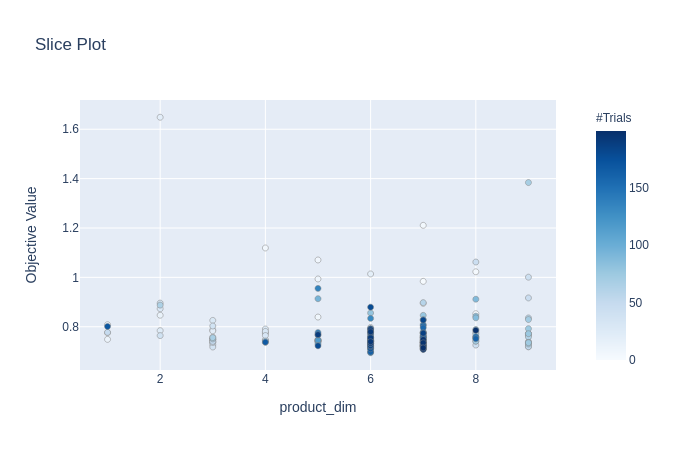}
    \caption{Product dimension hyperparameter search. We chose $6$.}
\end{figure}

\begin{figure}
    \centering
    \includegraphics[width=\textwidth]{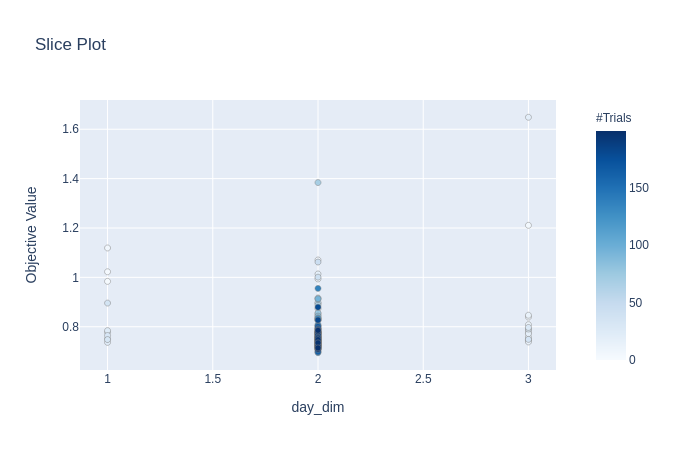}
    \caption{Day dimension hyperparameter search. We chose $2$.}
\end{figure}

\begin{figure}
    \centering
    \includegraphics[width=\textwidth]{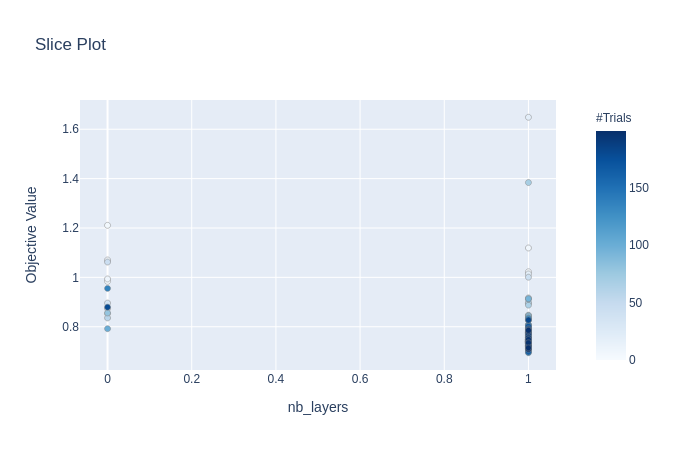}
    \caption{Number of hidden layers hyperparameter search. It motivated the use of perceptrons with one hidden layer.}
\end{figure}

\begin{figure}
    \centering
    \includegraphics[width=\textwidth]{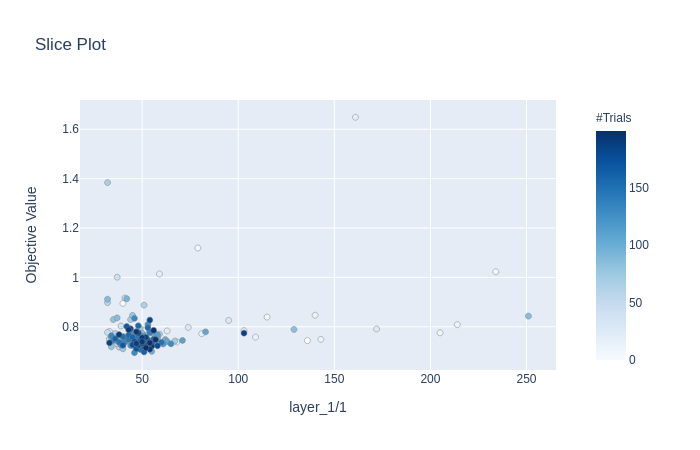}
    \caption{Size of the hidden layer in architectures with 1 hidden layer hyperparameter search. We chose $50$.}
\end{figure}

\begin{figure}
    \centering
    \includegraphics[width=\textwidth]{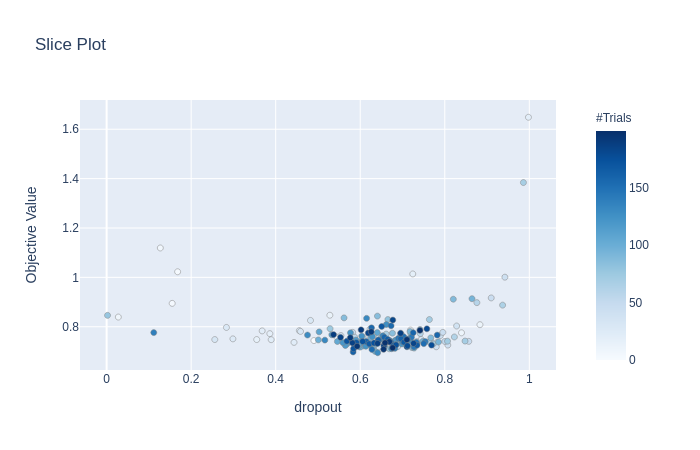}
    \caption{Dropout rate hyperparameter search. We chose $0.7$.}
\end{figure}

\begin{figure}
    \centering
    \includegraphics[width=\textwidth]{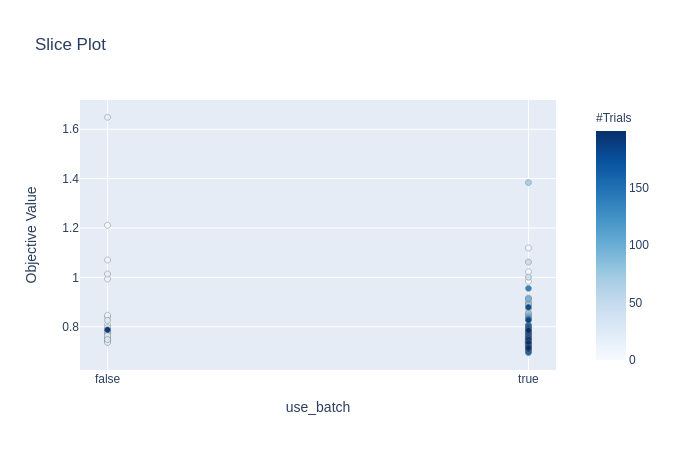}
    \caption{Batched optimization hyperparameter search. It shows the advantage of stochastic gradient descent.}
\end{figure}

\begin{figure}
    \centering
    \includegraphics[width=\textwidth]{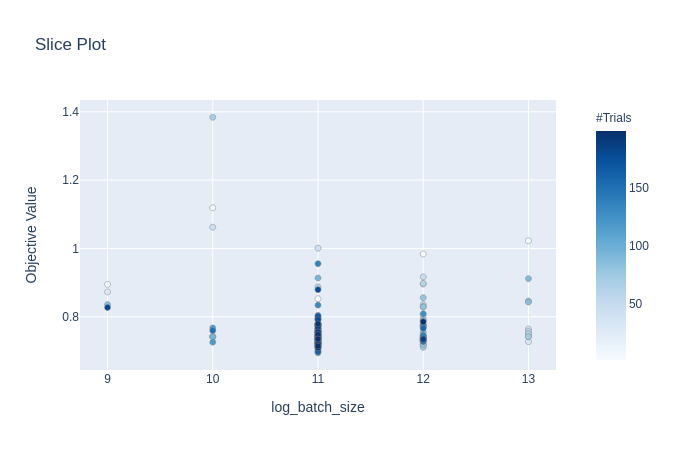}
    \caption{Batch size hyperparameter search. We choose $2^{12}$.}
\end{figure}

\begin{figure}
    \centering
    \includegraphics[width=\textwidth]{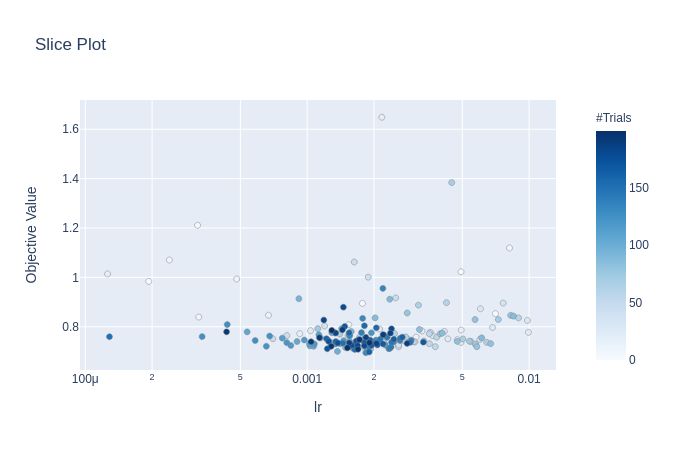}
    \caption{Learning rate hyperparameter search. We choose $2 \cdot 10^{-3}.$}
\end{figure}

\backmatter

\bibliographystyle{plain}
\bibliography{refs}


\end{document}